\documentclass[runningheads]{llncs}

\usepackage{eccv}

\usepackage{eccvabbrv}

\usepackage{booktabs}
\usepackage{caption}
\usepackage{graphicx}
\usepackage{booktabs}
\usepackage{amsmath}
\usepackage{amssymb}
\usepackage{mathtools}
\usepackage{multirow}
\usepackage{makecell}
\usepackage{bm}
\usepackage{tikz}
\usepackage[ruled, vlined]{algorithm2e}

\newcommand{\para}[1]{\vspace{.05in}
\noindent
\textbf{#1}\quad}

\usepackage[accsupp]{
    axessibility
} 

\usepackage[pagebackref]{hyperref}

\usepackage{orcidlink}

\begin{document}
    \title{Compress Then Adapt? No, Do It Together via Task-aware Union of Subspaces}
    \titlerunning{Joint Compression and Adaptation}

    \author{
    Jingze Ge\inst{1} \and
    Yun Liu\inst{2} \and
    Xue Geng\inst{3} \and
    Wanqi Dong\inst{1} \and
    Wang Zhe Mark\inst{3} \and
    Min Wu\inst{3} \and
    Xulei Yang\inst{3}
    }
    \authorrunning{Ge et al.}
    
    \institute{
    National University of Singapore \and
    Nankai University \and
    Institute for Infocomm Research (I$^2$R), A*STAR \\
    \email{
    jingze.ge@u.nus.edu,
    liuyun@mail.nankai.edu.cn,
    geng\_xue@i2r.a-star.edu.sg,
    wanqi.dong@u.nus.edu,
    wumin@a-star.edu.sg,
    YANG\_XULEI@I2R.A-STAR.EDU.SG
    }
    }
    \maketitle

    \begin{abstract}
    Adapting large pretrained models to diverse tasks is now routine, yet the two dominant strategies of parameter-efficient fine-tuning (PEFT) and low-rank compression are typically composed in sequence. This decoupled practice first compresses and then fine-tunes adapters, potentially misaligning the
    compressed subspace with downstream objectives and squandering a global
    parameter budget. To overcome this limitation, we introduce \textbf{JACTUS} (Joint
    Adaptation and Compression with a Task-aware Union of Subspaces), a single
    framework that unifies compression and adaptation. From a small calibration set,
    JACTUS estimates input and pre-activation gradient covariances, forms their orthogonal
    union with the pretrained weight subspace, performs a projected low-rank
    approximation inside this union, allocates rank globally by marginal gain per
    parameter, and trains only a compact core matrix. This explicitly mitigates the potential misalignment between the compressed subspace and downstream objectives by coupling the directions preserved for compression with those required for adaptation, yielding a deployable low-rank model that avoids retaining full frozen weights while enabling fast and robust tuning. On vision, JACTUS attains an average 89.2\% accuracy on ViT-Base across eight datasets at 80\% retained parameters,
    surpassing strong 100\% PEFT baselines (\eg, DoRA 87.9\%). On language, JACTUS
    achieves an 80.9\% average on Llama2-7B commonsense QA at the same 80\% retained-parameter budget,
    outperforming 100\% PEFT (\eg, DoRA 79.7\%) and exceeding prior compress-then-finetune
    pipelines under the same ratained-parameter budget. We will release code.
\end{abstract}
    \section{Introduction}
Modern vision and language models~\cite{dosovitskiy2021image, liu2021Swin, DBLP:journals/corr/abs-2302-13971,
touvron2023llama2} deliver strong performance but are expensive to adapt and deploy
at scale~\cite{wan2024efficientlargelanguagemodels,zhou2024efficientinference}.
Parameter-Efficient Fine-Tuning (PEFT) methods such as LoRA \cite{hu2022lora}
and its variants~\cite{liu2024dora,wang2023orthogonalsubspacelearninglanguage,zhang2023adaptive,meng2024pissa}
greatly reduce training cost by introducing low-rank adapters on top of frozen
weights, yet at inference time the full pre-trained weights must still be retained. Conversely, low-rank compression via Singular Value Decomposition (SVD)-family techniques~\cite{hsu2022languagemodelcompressionweighted,yuan2023asvd,wang2025svdllm,qinsi2025dobisvd}
reduces the number of deployed parameters, but typically selects directions by weight-space
energy or activation heuristics~\cite{amari1998natural,martens2015kfac},which only indirectly reflect downstream objectives. In practice, these two lines of work are often composed in sequence, where one first compresses a pretrained model and then fine-tunes it~\cite{wang2025svdllm,Hwang2024PCLoRA,Zhao2024CALoRA,zhang-etal-2024-loraprune}.
However, this seemingly modular pipeline exposes a fundamental tension between subspace preservation and task adaptation.
Compression typically selects a low-rank subspace to reduce reconstruction error or to preserve pretrained behavior, whereas downstream adaptation requires degrees of freedom that align with task-specific gradients.
When the compressed subspace is fixed in advance, the model is committed to a task-agnostic parameterization that can discard directions important for the downstream objective.
As a result, subsequent low-rank adaptation, even when it can be merged back into the compressed weights at deployment, may be intrinsically limited in recovering the lost task-relevant capacity, leading to persistent accuracy and robustness gaps under a fixed global budget.
This limitation becomes more pronounced in resource-constrained settings where the budget must be allocated across layers in a principled manner.
Therefore, the central challenge is to select a shared low-rank subspace and allocate a global budget so that compression preserves task-relevant capacity while adaptation uses it effectively.

\begin{figure}[!t]
    \centering
    \includegraphics[width=\linewidth]{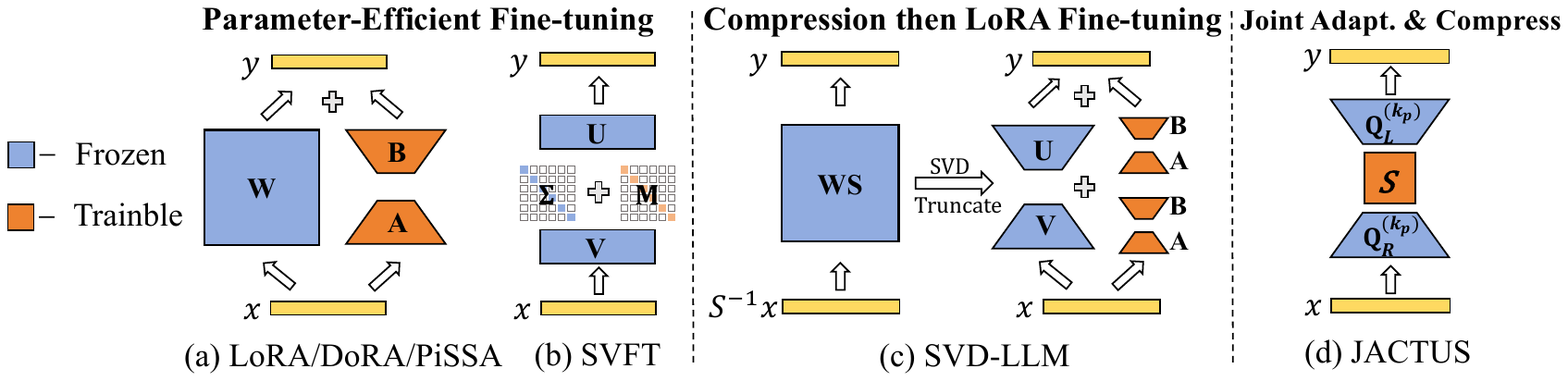}
    \caption{Comparison of three paradigms: PEFT, compression then fine-tuning, and our
    joint adaptation and compression.}
    \label{fig:arch-compare}
    \vspace{-0.3in}
\end{figure}

As illustrated in \cref{fig:arch-compare}, unifying compression and adaptation is not as simple as chaining existing techniques. To achieve this synergy, we must define a search space that (i) preserves task-relevant capacity and (ii) supports effective task-specific updates under a fixed global parameter budget. Achieving both requires a task-aware mechanism that bridges the subspace favored by compression and the directions demanded by downstream objectives, so that the deployed low-rank model avoids the accuracy gaps caused by subspace misalignment. 

To this end, \textbf{JACTUS} (Joint Adaptation and Compression with a Task-aware Union of Subspaces) performs task-aware compression and tuning jointly within a single shared search space. Given a small calibration set, JACTUS estimates task-aware second-order statistics~\cite{amari1998natural, martens2015kfac} on both sides of each linear layer: the input covariance $\mathbf{C}_{x}$ and the pre-activation gradient covariance $\mathbf{C}_{g}$. Together, they induce a bilateral, downstream-informed metric that emphasizes loss-sensitive directions. We then construct an orthogonal union between the task-aware subspaces (from $\mathbf{C}_{x}, \mathbf{C}_{g}$) and the high-energy weight subspaces (from the SVD of $\mathbf{W}$), yielding left/right union bases ($\mathbf{Q}_{L}$/$\mathbf{Q}_{R}$) that cover both signals while removing redundancy. 

Within this union, we compute a projected low-rank approximation~\cite{golub2013matrix, benisrael2003generalized} by performing SVD on the small core $\mathbf{Q}_{L}^{\top}\mathbf{W}\mathbf{Q}_{R}$, whose spectrum directly guides rank allocation. Finally, under a global parameter budget, JACTUS greedily allocates ranks across layers based on marginal gain per parameter, and fine-tunes only the low-dimensional core matrix inside the fixed union subspace, as shown in \cref{fig:framework}. We further show that once the union subspaces are fixed, core-only optimization is \textbf{basis-invariant}: it explores the same attainable solutions as updating the factor bases, but at substantially lower cost.

This design closes the loop between compression and adaptation. Compared to SVD-only
pipelines~\cite{hsu2022languagemodelcompressionweighted,yuan2023asvd,wang2025svdllm,qinsi2025dobisvd},
JACTUS selects directions under a task-aware bilateral metric rather than weight
energy alone. Compared to PEFT approaches~\cite{hu2022lora,liu2024dora,wang2023orthogonalsubspacelearninglanguage,zhang2023adaptive,meng2024pissa,lingam2024svft,tian2024hydralora,zhang2023adaloraadaptivebudgetallocation,kopiczko2024vera,liu2024afloraadaptivefreezinglow},
it removes the need to retain full frozen weights at inference: the trained model
is the compact low-rank form. The global allocator further reconciles
heterogeneous layer sizes and spectra by spending budget where the next singular
direction provides the largest reduction in projected error per parameter.

We evaluate JACTUS on image classification with ViT-Base/Large
\cite{dosovitskiy2021image} and Swin-Base/Large \cite{liu2021Swin} across eight
datasets, and on Llama2-7B \cite{touvron2023llama2} for commonsense question answering
and mathematical reasoning. Under 80\% retained-parameter budgets, JACTUS outperforms strong 100\% PEFT baselines on both modalities (\eg, ViT-Base average 89.2\% \vs DoRA~\cite{liu2024dora} 87.9\%, and Llama2-7B commonsense average 80.9\% \vs DoRA 79.7\%), and degrades smoothly at 60\% and 40\% budgets while remaining competitive. Ablations show that (i) the joint union subspace strictly dominates weight-only or task-only subspaces; (ii) moderate calibration (about 1K examples) suffices for stable statistics; and (iii) global rank allocation yields sizable gains over uniform per-layer allocation.

The contributions of this work include:
\begin{itemize}
    \item We introduce \textbf{JACTUS}, a framework that unifies compression and
        adaptation by operating in an orthogonal union of weight and task-aware subspaces
        derived from input and gradient covariances.

    \item We build a \textbf{task-aware union of subspaces} where input/gradient covariances identify downstream-sensitive left/right directions and are fused with weight singular subspaces via orthogonalization. On top of that, we perform \textbf{cost-aware global rank allocation} to distribute ranks across layers under a fixed parameter budget.

    \item We prove a subspace invariance result showing that, once the working union subspaces are fixed, \textbf{optimizing only the core matrix} is equivalent to updating factor bases within those subspaces, enabling efficient fine-tuning.
\end{itemize}

    \section{Related Work}

\subsection{Low-Rank Compression via SVD}
Activation-aware SVD (ASVD)~\cite{yuan2023asvd} leverages activation statistics for
training-free calibration, but it is not explicitly aligned with downstream loss/gradient
geometry and can degrade under high compression or cross-task transfer.
Fisher-weighted SVD (FWSVD)~\cite{hsu2022languagemodelcompressionweighted} incorporates empirical
Fisher for task importance, yet selection is still formulated in a weight-space metric
and does not model the bilateral structure induced by inputs and gradients.
SVD-LLM~\cite{wang2025svdllm} stabilizes truncation via whitening and closed-form updates,
but continues to optimize weight reconstruction and typically decouples compression from adaptation.
Dobi-SVD~\cite{qinsi2025dobisvd} makes truncation differentiable and reconstructs weights via incremental PCA; however, its activation-only reconstruction can be sensitive to the calibration set and still
requires a separate fine-tuning stage to recover performance.

Overall, SVD-style compression often determines the retained subspace from weight energy or partial task cues, while adaptation is performed later in a different search space, which is an implicit source of subspace mismatch.
In contrast, JACTUS explicitly constructs a task-aware union of subspaces shared by compression and tuning, and couples it with a cost-aware global rank allocation across layers to preserve task-relevant capacity under a fixed deployment budget.

\subsection{Parameter-Efficient Fine-Tuning (PEFT)}
LoRA~\cite{hu2022lora} and its variants (e.g., DoRA~\cite{liu2024dora}, PiSSA~\cite{meng2024pissa},
OLoRA~\cite{wang2023orthogonalsubspacelearninglanguage}, AdaLoRA~\cite{zhang2023adaptive},
HydraLoRA~\cite{tian2024hydralora}, VeRA~\cite{kopiczko2024vera}, AFLoRA~\cite{liu2024afloraadaptivefreezinglow})
reduce training cost via low-rank adapters, but the deployed model typically retains full weights.
SVFT~\cite{lingam2024svft} re-parameterizes updates via a sparse intermediate matrix, yet storing both
full $\mathbf{U}$ and $\mathbf{V}$ can increase deployment storage.
More broadly, PEFT is primarily designed to approximate full fine-tuning without changing the base parameterization, and thus does not directly address the setting where weights are already low-rank compressed and the question is how to adapt efficiently within this constrained structure.

JACTUS targets this compressed-deployment regime: once the task-aware subspace is fixed, it performs tuning by updating only compact core matrices, which is equivalent to fine-tuning the full low-rank structure while keeping a PEFT-level memory footprint. This connects PEFT-style training efficiency with low-rank deployment without expanding the compressed parameterization.
    \section{Methodology}

\begin{figure*}[!t]
    \centering
    \vspace{-0.1in}
    \includegraphics[width=\linewidth]{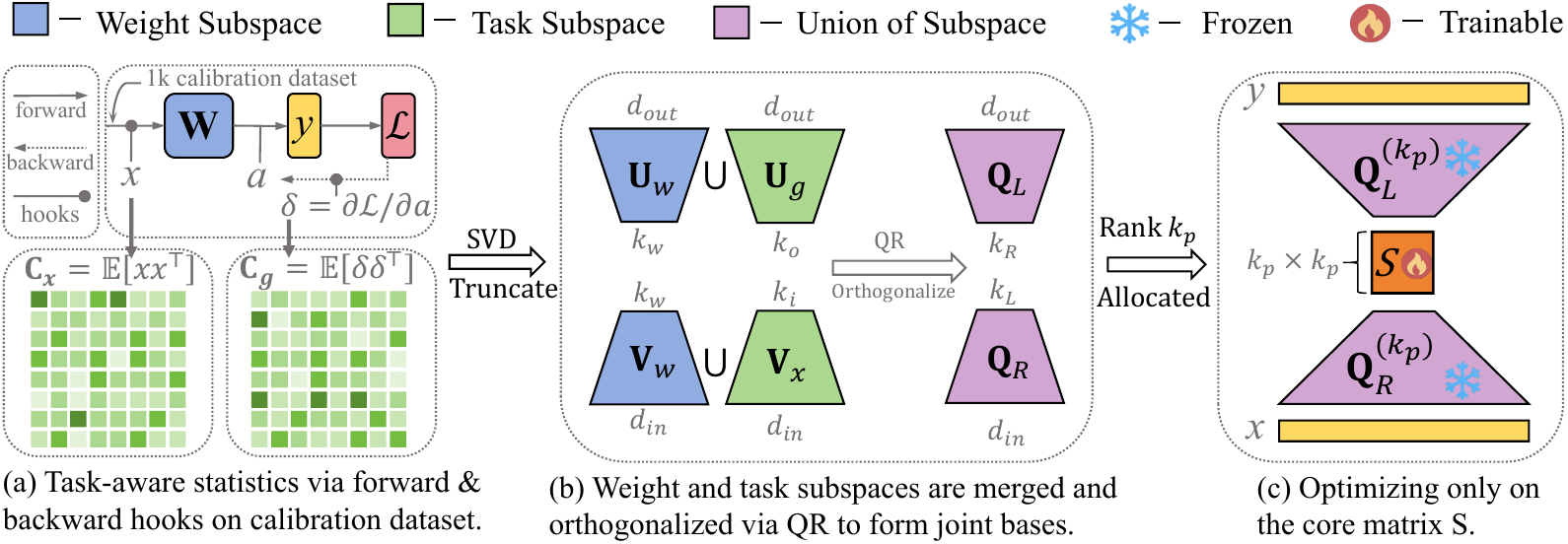}
    \vspace{-.2in}
    \caption{Overview of the proposed subspace-based adaptation. (a) Activation and
    gradient covariances $\mathbf{C}_{x}$ and $\mathbf{C}_{g}$ are estimated on the
    calibration set. (b) $\mathbf{U}_{w}, \mathbf{V}_{w}, \mathbf{U}_{g}, \mathbf{V}_{x}$ are
    obtained by applying SVD to $\mathbf{W}$, $\mathbf{C}_{x}$, and $\mathbf{C}_{g}$, then
    truncated by energy thresholds $\alpha,\beta,\gamma$ to ranks $k_{w}, k_{i},
    k_{o}$; the merged subspaces form joint bases
    $\mathbf{Q}_{L}, \mathbf{Q}_{R}$ with ranks $k_{L}, k_{R}$ satisfying
    $k_{w} + k_{i} = k_{L}$ and $k_{w} + k_{o} = k_{R}$. (c) A global rank allocator
    assigns rank $k_{p}$ to each linear layer.}
    \label{fig:framework}
    \vspace{-.2in}
\end{figure*}

To achieve an integrated
approach to model compression and task adaptation, the core process of JACTUS is
divided into three steps. As illustrated in \cref{fig:framework}, (i) we construct a task-aware union subspace guided by K-FAC statistics, merging pretrained weight structure with task sensitivity and performing projected low-rank approximation within this subspace; (ii) given a total parameter budget, we perform cost-aware global rank allocation via a greedy marginal-gain strategy to assign per-layer ranks; and (iii) we optimize core matrices only by leveraging low-rank reparameterization invariance, which enables efficient post-adaptation fine-tuning. We will elaborate
on our method focusing on an arbitrary linear layer $y = \mathbf{W}x$ (where the weight $\mathbf{W}\in \mathbb{R}^{d_{\text{out}}\times d_{\text{in}}}$) in a large foundation model.

\subsection{Task-aware Union of Subspaces Guided by K-FAC}

Traditional low-rank compression methods, such as SVD, primarily select
subspaces based on the singular spectrum of the weight matrix (\ie, its
cumulative energy). This selection criterion may have a weak correlation with
downstream task performance, leading to a subspace misalignment problem. To
address this issue, we introduce task-aware statistics using K-FAC(Kronecker-factorized Approximate curvature)~\cite{martens2015kfac}, extracting key information directly from the data distribution and loss function of the
downstream task using a small calibration set~\cite{Nagel2020AdaRound, Frantar2022GPTQ}.

\para{Task-aware Statistics Estimation.} We estimate two task-aware second-order
moments for a linear layer $a=\mathbf{W}x$: the input covariance
$\mathbf{C}_{x}=\mathbb{E}[x x^{\top}]\in\mathbb{R}^{d_{\text{in}}\times d_{\text{in}}}$
and the pre-activation gradient covariance
$\mathbf{C}_{g}=\mathbb{E}[\delta\delta^{\top}]\in\mathbb{R}^{d_{\text{out}}\times
d_{\text{out}}}$, where $\delta=\partial\mathbf{L}/\partial a$. From the natural-gradient
and K-FAC perspectives~\cite{amari1998natural, martens2015kfac}, the Gauss–Newton/Fisher
block for $\mathbf{W}$ in such a layer admits a Kronecker-factored approximation
\[
    F_{W} \;\approx\; \mathbb{E}[\delta\delta^{\top}]\;\otimes\;\mathbb{E}[x x^{\top}
    ] \;=\; \mathbf{C}_{g} \,\otimes\, \mathbf{C}_{x},
\]
so that $\mathbf{C}_{x}$ and $\mathbf{C}_{g}$ are precisely the two Kronecker factors
capturing input- and output-side curvature, respectively~\cite{amari1998natural,martens2015kfac}.
Selecting directions by the principal eigenspaces of $(\mathbf{C}_{x},\mathbf{C}_{g})$ therefore amounts to choosing directions that are important under a downstream-induced, bilateral (left/right) Fisher metric rather than by weight energy alone.

In practice, we accumulate $\mathbf{C}_{x}$ and $\mathbf{C}_{g}$ efficiently
from a small calibration set via streaming estimates with forward/backward hooks.
To enhance numerical stability and handle input directions potentially not covered
by the calibration set, we apply ridge regularization
\[
\small
    \mathbf{\tilde{C}}_{x} \;=\; \mathbf{C}_{x} + \tau_{x} \mathbf{I},\qquad \mathbf{\tilde{C}}
    _{g} \;=\; \mathbf{C}_{g} + \tau_{g} \mathbf{I},
\]
where the $\mathbf{I}$ is the identity matrix, and the regularization
coefficients $\tau_{x}=\epsilon\,\mathrm{tr}(\mathbf{C}_{x})/d_{\text{in}}$ and
$\tau_{g}=\epsilon\,\mathrm{tr}(\mathbf{C}_{g})/d_{\text{out}}$ are to maintain
scale invariance \cite{Tikhonov1963,LedoitWolf2004}.
This step explicitly encodes task sensitivity into left and right metric
matrices, laying the foundation for constructing task-aligned subspaces. To
construct a low-rank representation that both effectively reconstructs the
original weights and remains sensitive to critical directions for the downstream
task, we propose building a task-aware subspace union and performing a projected
low-rank approximation within it.

\para{Weight-Task Subspace Fusion}
Pretrained weights and task statistics provide complementary signals. Weight-derived
subspaces capture compressible global structure, while task-aware subspaces
capture directions most sensitive to the downstream loss. Using only one source introduces bias by either favoring high-energy but less relevant directions or chasing calibration noise, so we first identify both subspaces and then merge them. This yields a
search space that is simultaneously reconstructive and task-aligned under the
same parameter budget. There are two types of subspaces:
\begin{enumerate}
    \item \textbf{Weight Subspace}: We perform SVD on the original weight matrix
        $\mathbf{W}$ and select the subspaces spanned by the top $k_{w}$ left and
        right singular vectors (\eg, those capturing an energy threshold $\alpha$)
        \cite{golub2013matrix}, denoted as $\mathrm{span}(\mathbf{U}_{w})$ and $\mathrm{span}
        (\mathbf{V}_{w})$, respectively.

    \item \textbf{Task Subspace}: We perform eigendecomposition on the regularized
        covariance matrices $\mathbf{\tilde{C}}_{g}$ and $\mathbf{\tilde{C}}_{x}$
        and select the subspaces spanned by the top $k_{o}$ and $k_{i}$ eigenvectors
        (those capturing cumulative eigenvalue thresholds $\beta$ and $\gamma$),
        denoted as $\mathrm{span}(\mathbf{U}_{g})$ and
        $\mathrm{span}(\mathbf{V}_{x})$, respectively.
\end{enumerate}

To avoid subspace redundancy and ensure numerical stability, we use QR
decomposition to orthogonalize \cite{golub2013matrix} the concatenated subspaces.
This guarantees that the union subspace forms a well-conditioned orthogonal basis
that fully spans the combined representational directions of pretrained weights and
task sensitivity. Without orthogonalization, overlapping directions can lead to redundant
representations and unstable projections, which degrade both the low-rank
approximation and subsequent global rank allocation. Thus, we merge these two
types of subspaces via orthogonalization to construct the left and right
orthogonal union subspace bases, $\mathbf{Q}_{L}$ and $\mathbf{Q}_{R}$:
\[
\small
    \mathbf{Q}_{L}= \mathrm{orth}\big([\mathbf{U}_{w}, \mathbf{U}_{g}]\big) \in \mathbb{R}
    ^{d_{\text{out}}\times k_L}
\]
\[
\small
    \mathbf{Q}_{R}= \mathrm{orth}\big([\mathbf{V}_{w}, \mathbf{V}_{x}]\big) \in \mathbb{R}
    ^{d_{\text{in}}\times k_R}
\]
where $\mathrm{orth}(\cdot)$ denotes the orthogonalization of a set of column
vectors (\eg, via QR decomposition), and $k_{L}\le k_{w}+ k_{o}$ and
$k_{R}\le k_{w}+ k_{i}$ are the dimensions of the union subspaces. This union subspace
considers both weight energy and task sensitivity, and its representational
capacity is no worse than that of either individual subspace. This fundamentally
alleviates the performance recovery bottleneck encountered in post-compression fine-tuning,
which often operates in a suboptimal subspace.

Having obtained the target low-rank search subspace, we next require an initial point within this subspace to start optimization effectively; therefore, we initialize by projecting $\mathbf{W}$ onto the constructed subspace:
\[
\small
    \mathbf{W}_{\text{proj}}= \mathbf{Q}_{L}^{\top}\mathbf{W}\mathbf{Q}_{R}\in \mathbb{R}
    ^{k_L \times k_R}
\]
Next, we perform SVD on $\mathbf{W}_{\text{proj}}$, which has a much smaller
dimension than the original weight matrix: $\mathbf{W}_{\text{proj}}= \bar{\mathbf{U}}
\mathbf{\Sigma}\bar{\mathbf{V}}^{\top}$. This step not only reduces the
computational cost of SVD but also leads to a more stable estimation of the
singular spectrum. The squared values of the resulting singular values,
$\{\hat{\sigma}_{i}^{2}\}$, form the marginal gain curve for this layer within the
union subspace. The gain from increasing the rank from $k$ to $k+1$, measured as
the reduction in squared Frobenius reconstruction error, is approximated by
$\hat{\sigma}_{k+1}^{2}$ \cite{eckart1936approximation}.

\subsection{Cost-Aware Global Rank Allocator}
\label{sec:rank_allocation}

When compressing a large model to a low-rank form, the total number of remaining
parameters is a hard constraint. Prior practice (\eg, SVD-LLM \cite{wang2025svdllm})
often allocates the parameter budget independently across linear layers,
implicitly assuming that every rank increment is equally cost-effective.
This overlooks two facts: (i) layers have different dimensions, hence different
parameter costs per rank, and (ii) their singular spectra differ, so the
benefit (in reconstruction or downstream loss) of preserving the next singular
direction is heterogeneous. 

To address these limitations, we propose a cost-aware global rank allocator that shifts the paradigm from isolated layer-wise budgeting to a unified resource pool. Our motivation is grounded in the principle of maximal return on investment: every parameter should be allocated to the layer where it yields the highest performance gain relative to its size footprint. To systematically compare these heterogeneous benefits across layers, we quantify the \emph{marginal gain per unit parameter cost}. Concretely, for layer $p$ with dimensions
$d_{p}^{\text{out}}\times d_{p}^{\text{in}}$, increasing the rank by one in a $\mathbf{U}
_{p} \mathbf{V}_{p}^{\top}$ factorization adds $\Delta\text{params}_{p} = d_{p}^{\text{out}}
+ d_{p}^{\text{in}}$ parameters. In our task-aware union subspace, the projected
matrix $\mathbf{W}_{\text{proj},p}$ yields a stable spectrum $\{\hat{\sigma}_{p,i}
\}$; the reduction in squared Frobenius error when moving from rank $k_{p}$ to
$k_{p}{+}1$ is approximated by
$\text{gain}_{p}(k_{p}\!\to\!k_{p}{+}1)\ \approx\ \hat{\sigma}_{p,k_p+1}^{2}.$
We therefore rank all candidate increments by
\vspace{-5pt}
\[
    \rho_{p,k_p+1}\;=\;\frac{\hat{\sigma}_{p,k_p+1}^{2}}{\;\Delta\text{params}_{p}\;}
    ,
\]
and spend the budget on the largest ratios first, which naturally leads to our global
allocator that greedily selects rank increments in decreasing $\rho$ until the budget
is exhausted~\cite{Khuller1999BudgetedMC}. 

\begingroup
\setlength{\textfloatsep}{6pt}
\setlength{\intextsep}{6pt}
\setlength{\floatsep}{6pt}
\begin{algorithm}[!t]
    \caption{Global Rank Allocation under Parameter Budget $B$}
    \label{alg:global_rank_allocation} \SetKwInOut{Input}{Input} \SetKwInOut{Output}{Output}
    \Input{Number of layers $P$; layer sizes $\{d_{p}^{\text{out}}, d_{p}^{\text{in}}\}_{p=1}^{P}$; squared singular values $\{\hat{\sigma}_{p,i}^{2}\}$ from $\mathbf{W}_{\text{proj},p}$; budget $B$; bounds $\{k_{\min,p},k_{\max,p}\}$.}
    \Output{Allocated ranks $\{k_{p}\}_{p=1}^{P}$.} \BlankLine \For{$p\gets 1$ \KwTo $P$}{ $\text{cost}_{p}\gets d_{p}^{\text{out}}+ d_{p}^{\text{in}}$\; $k_{p}\gets k_{\min,p}$\; $L_{p} \gets \min\!\big(k_{\max,p},\, |\{\hat{\sigma}_{p,i}^{2}\}|\big)$\; }
    $\text{used}\gets \sum_{p=1}^{P}k_{p}\cdot \text{cost}_{p}$\; \BlankLine
    \textbf{Initialize} max-heap $\mathbf{H}$\; \For{$p\gets 1$ \KwTo $P$}{ \If{$k_{p}< L_{p}$}{ push $\big(\hat{\sigma}_{p,k_p+1}^{2}/\text{cost}_{p},\, p,\, k_{p}{+}1\big)$ into $\mathbf{H}$\; } }
    \While{$\mathbf{H}$ not empty}{ $(s, p, i) \gets \text{pop\_max}(\mathbf{H})$\; \If{$\mathrm{used}+ \mathrm{cost}_{p}> B$}{\textbf{continue}\;} $k_{p}\gets i$\; $\text{used}\gets \text{used}+ \text{cost}_{p}$\; \If{$i < L_{p}$}{ push $\big(\hat{\sigma}_{p,i+1}^{2}/\text{cost}_{p},\, p,\, i{+}1\big)$ into $\mathbf{H}$\; } }
    \Return $\{k_{p}\}_{p=1}^{P}$\;
\end{algorithm}
\endgroup


\begin{figure*}[!t]
    \centering
    \includegraphics[width=1\linewidth]{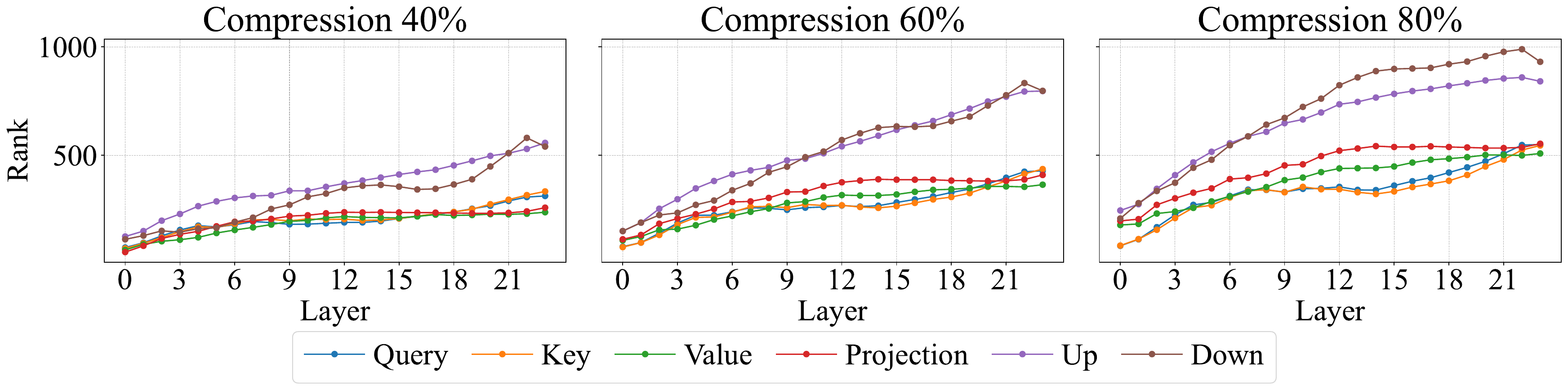}
    \vspace{-0.25in}
    \caption{Layer-wise ranks selected by the greedy global allocator for ViT-Large~\cite{dosovitskiy2021image} under 40\%, 60\%, and 80\% retained-parameter budget. We plot the ranks of attention (Query/Key/Value/Projection) and MLP (Up/Down) projections across layers. The calibration dataset is extracted from \textsc{CIFAR-100}~\cite{krizhevsky2009learning} training split.}
    \label{fig:ranks_vit}
    \vspace{-0.25in}
\end{figure*}

We detail this procedure in
\cref{alg:global_rank_allocation}. Given a target retained-parameter budget $c$, we induce a
global parameter budget B for the whole model. For each layer $p$, let
$k_{\min,p},k_{\max,p}$ be layer-wise bounds (defaults: $32$ and the number of
available singular values, respectively). We initialize $k_{p}\!\leftarrow\!k_{\min,p}$
and maintain a max-heap over the next available ratio $\rho_{p,k_p+1}$. At each
step, we pop the largest ratio that still fits the remaining budget, assign one
more rank to that layer, update the used parameters, and push its next candidate
(if any). This process continues until the heap is empty or the budget is exhausted.
The procedure is anytime and monotone: as $B$ grows, ranks only increase; it also
adapts to heterogeneous spectra and unequal layer sizes by normalizing gains with
per-rank costs.
The algorithm selects at most $\lfloor (B - \text{used}_{0})/\min_{p} \text{cost}
_{p} \rfloor$ rank increments, each with a heap operation of $O(\log P)$, yielding
an overall complexity $O\!\big((\sum_{p} k_{p})\log P\big)$. Because the
candidates within each layer follow a nonincreasing spectrum, the procedure naturally
spreads the budget over layers whose next singular directions are most valuable.

\cref{fig:ranks_vit} reports layer-wise ranks under the greedy allocator at a
40\%, 60\% and 80\% retained-parameter budget. Here, the Query, Key, Value, and Projection denote the self-attention weight matrices, while the Up and Down matrices are the first and second linear
maps of the MLP. The allocation is clearly non-uniform: Up and Down consistently
receive higher ranks than Query/Key/Value/Projection, and the assigned ranks generally
increase with depth across the network.

\subsection{Core-Only Optimization under Low-Rank Subspace Invariance}

After constructing task-aware left/right union subspaces
$\mathbf{Q}_{L}\subset\mathbb{R}^{d_{\mathrm{out}}}$ and
$\mathbf{Q}_{R}\subset\mathbb{R}^{d_{\mathrm{in}}}$ with
$k_{L}=\dim \mathbf{Q}_{L}$ and $k_{R}=\dim \mathbf{Q}_{R}$, the cost-aware global rank allocation
in the previous section selects a target rank $k_{p}\le \min\{k_{L},k_{R}\}$.
Henceforth, we work with the $k_{p}$-dimensional working subspaces $\mathbf{Q}_{L}
^{(k_p)}\subseteq \mathbf{Q}_{L}$,$\mathbf{Q}_{R}^{(k_p)}\subseteq \mathbf{Q}_{R}
,$ and all parameterizations use $k_{p}$ columns/rows drawn from these subspaces. While post-compression adaptation typically involves updating the low-rank factors $\mathbf{U}$ and $\mathbf{V}$ to recover performance, such unconstrained updates introduce redundant optimization trajectories and 
unnecessary computational overhead. Instead, we demonstrate that once the optimal 
task-aligned subspaces are identified, the parameter updates can be strictly 
confined to a compact core matrix $\mathbf{S}$. To rigorously justify this 
core-only optimization strategy, it is essential to abstract away from specific 
matrix parameterizations and examine the coordinate-free geometric space of 
achievable weight perturbations. The feasible set for downstream adaptation is 
defined as the following \emph{basis-free} tensor-product space $\mathcal{M}$:
\vspace{-5pt}
\[
    \small \mathcal{M}\!\left(\mathbf{Q}_{L}^{(k_p)},\mathbf{Q}_{R}^{(k_p)}\right) =\Bigl
    \{\, \mathbf{X}\in\mathbb{R}^{d_{\mathrm{out}}\times d_{\mathrm{in}}}\ \Big|\
    \begin{aligned}
         & \mathrm{Col}(\mathbf{X})\subseteq \mathbf{Q}_{L}^{(k_p)}, \\
         & \mathrm{Row}(\mathbf{X})\subseteq \mathbf{Q}_{R}^{(k_p)}
    \end{aligned}
    \Bigr\}.
\]
This space $\mathcal{M}$ encompasses all possible low-rank weight matrices whose 
column and row spans are strictly bounded within the selected task-aware subspaces, 
regardless of their explicit parameterization.
Equivalently, for any choice of bases
$\mathbf{U}\in\mathbb{R}^{d_{\mathrm{out}}\times k_p}$ and
$\mathbf{V}\in\mathbb{R}^{d_{\mathrm{in}}\times k_p}$ satisfying
$\mathrm{Span}(\mathbf{U})=\mathbf{Q}_{L}^{(k_p)}$ and
$\mathrm{Span}(\mathbf{V})=\mathbf{Q}_{R}^{(k_p)}$, we have the parameterization
\vspace{-5pt}
\[
\small
    \mathcal{S}(\mathbf{U},\mathbf{V}) =\Bigl\{\, \mathbf{USV}^{\top}\ \Big|\
    \begin{aligned}
         & \mathbf{U}\in\mathbb{R}^{d_{\mathrm{out}}\times k_p},\hspace{2mm}\mathrm{Span}(\mathbf{U})=\mathbf{Q}_{L}^{(k_p)}, \\
         & \mathbf{V}\in\mathbb{R}^{d_{\mathrm{in}}\times k_p},\quad \mathrm{Span}(\mathbf{V})=\mathbf{Q}_{R}^{(k_p)},        \\
         & \mathbf{S}\in\mathbb{R}^{k_p\times k_p}
    \end{aligned}
    \Bigr\}.
\]
Compression and adaptation therefore reduce to searching for an optimum \emph{inside
the fixed, task-specific space}
$\mathcal{M}\!\left(\mathbf{Q}_{L}^{(k_p)},\mathbf{Q}_{R}^{(k_p)}\right)$ rather
than roaming the full parameterization. The projected SVD step provides one convenient
orthonormal basis $(\mathbf{U},\mathbf{V})$ for this space together with an initial
core $\mathbf{S}$. Crucially, the space $\mathcal{M}\!\left(\mathbf{Q}_{L}^{(k_p)}
,\mathbf{Q}_{R}^{(k_p)}\right)$ is determined solely by $\bigl(\mathbf{Q}_{L}^{(k_p)}
,\mathbf{Q}_{R}^{(k_p)}\bigr)$—not by a particular choice of bases spanning them.
Switching bases only reparameterizes $\mathbf{S}$, leaving the attainable set of
solutions unchanged.

\para{Optimization objective.} Once the working subspaces are fixed, our
objective is to minimize the loss over
$\mathcal{M}\!\left(\mathbf{Q}_{L}^{(k_p)},\mathbf{Q}_{R}^{(k_p)}\right)$. Allowing
$\mathbf{U},\mathbf{V}$ to vary while keeping their spans fixed yields:
\begin{equation}
\vspace{-5pt}
\small
    \label{eq:subspace-opt}
    \begin{aligned}
        \min_{\mathbf{\widetilde{U}},\,\mathbf{\widetilde{S}},\,\mathbf{\widetilde{V}}}\; & \mathcal{L}\!\left(\mathbf{\widetilde{U}}\,\mathbf{\widetilde{S}}\,\mathbf{\widetilde{V}}^{\top}\right) \\
        \text{s.t.}\; \mathrm{Span}(\mathbf{\widetilde{U}})=                              & \mathbf{Q}_{L}^{(k_p)}, \mathrm{Span}(\mathbf{\widetilde{V}})=\mathbf{Q}_{R}^{(k_p)}.
    \end{aligned}
\end{equation}

\begin{theorem}
    \label{the:subspace}\textbf{Subspace invariance and parameterization
    equivalence.} Let $\mathbf{U}\in\mathbb{R}^{d_{\mathrm{out}}\times k_p}$ and
    $\mathbf{V}\in\mathbb{R}^{d_{\mathrm{in}}\times k_p}$ have full column rank,
    and define
    $\mathcal{S}(\mathbf{U},\mathbf{V})=\{\mathbf{USV}^{\top}: \mathbf{S}\in\mathbb{R}
    ^{k_p\times k_p}\}$. If $\mathbf{U'},\mathbf{V'}$ satisfy
    \[
    \small
        \begin{aligned}
            \mathrm{Span}(\mathbf{U'})                 & =\mathrm{Span}(\mathbf{U})=\mathbf{Q}_{L}^{(k_p)}, \\
            \text{and}\quad \mathrm{Span}(\mathbf{V'}) & =\mathrm{Span}(\mathbf{V})=\mathbf{Q}_{R}^{(k_p)},
        \end{aligned}
    \]

    \noindent then there exist invertible $\mathbf{R}_{L}\in\mathbb{R}^{k_p\times k_p}$
    and $\mathbf{R}_{R}\in\mathbb{R}^{k_p\times k_p}$ with
    $\mathbf{U'}=\mathbf{U}\mathbf{R}_{L}$ and
    $\mathbf{V'}=\mathbf{V}\mathbf{R}_{R}$, and
    \[
    \small
        \mathcal{S}(\mathbf{U'},\mathbf{V'}) \;=\; \mathcal{S}(\mathbf{U},\mathbf{V}
        ) \;=\; \mathcal{M}\!\left(\mathbf{Q}_{L}^{(k_p)},\mathbf{Q}_{R}^{(k_p)}\right
        ).
    \]
\end{theorem}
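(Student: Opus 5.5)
The proof splits into three parts: construct the change-of-basis matrices, show the parameterized set is basis-independent, and identify it with $\mathcal{M}$.

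\textbf{Step 1: existence of $\mathbf{R}_L$ and $\mathbf{R}_R$.} Since $\mathbf{U}$ has full column rank, it has a left inverse $\mathbf{U}^{+}=(\mathbf{U}^\top\mathbf{U})^{-1}\mathbf{U}^\top$. Every column of $\mathbf{U'}$ lies in $\mathrm{Span}(\mathbf{U})$, so $\mathbf{U'}=\mathbf{U}\mathbf{R}_L$ with $\mathbf{R}_L:=\mathbf{U}^{+}\mathbf{U'}$. Because $\mathrm{Span}(\mathbf{U'})=\mathrm{Span}(\mathbf{U})$ has dimension $k_p$, the $d_{\mathrm{out}}\times k_p$ matrix $\mathbf{U'}$ has rank $k_p$. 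Then $\mathrm{rank}(\mathbf{R}_L)\ge \mathrm{rank}(\mathbf{U}\mathbf{R}_L)=k_p$, so the $k_p\times k_p$ matrix $\mathbf{R}_L$ is invertible. The same argument applied to $\mathbf{V},\mathbf{V'}$ gives $\mathbf{R}_R$.

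\textbf{Step 2: $\mathcal{S}(\mathbf{U'},\mathbf{V'})=\mathcal{S}(\mathbf{U},\mathbf{V})$.} Substituting gives
\[
\mathbf{U'}\mathbf{S}\mathbf{V'}^\top=\mathbf{U}(\mathbf{R}_L\mathbf{S}\mathbf{R}_R^\top)\mathbf{V}^\top .
\]
The map $\mathbf{S}\mapsto \mathbf{R}_L\mathbf{S}\mathbf{R}_R^\top$ is a bijection of $\mathbb{R}^{k_p\times k_p}$, with inverse $\mathbf{S}\mapsto\mathbf{R}_L^{-1}\mathbf{S}\mathbf{R}_R^{-\top}$. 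Hence the two sets coincide.

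\textbf{Step 3: $\mathcal{S}(\mathbf{U},\mathbf{V})=\mathcal{M}$.} I expect this to be the only step with real content, namely the inclusion $\mathcal{M}\subseteq\mathcal{S}$.

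For $\mathcal{S}\subseteq\mathcal{M}$: we have $\mathrm{Col}(\mathbf{U}\mathbf{S}\mathbf{V}^\top)\subseteq\mathrm{Col}(\mathbf{U})=\mathbf{Q}_L^{(k_p)}$. Likewise, the row space lies in $\mathrm{Col}(\mathbf{V})=\mathbf{Q}_R^{(k_p)}$.

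For $\mathcal{M}\subseteq\mathcal{S}$: take $\mathbf{X}\in\mathcal{M}$ and use the projectors $\mathbf{P}_U=\mathbf{U}\mathbf{U}^{+}$ and $\mathbf{P}_V=\mathbf{V}\mathbf{V}^{+}$ onto the two subspaces.
\begin{itemize}
\item The column condition gives $\mathbf{X}=\mathbf{P}_U\mathbf{X}$.
\item The row condition says the columns of $\mathbf{X}^\top$ lie in $\mathrm{Span}(\mathbf{V})$, so $\mathbf{X}^\top=\mathbf{P}_V\mathbf{X}^\top$, i.e.\ $\mathbf{X}=\mathbf{X}\mathbf{P}_V^\top$.
\end{itemize}
Combining the two,
\[
\mathbf{X}=\mathbf{U}\bigl(\mathbf{U}^{+}\mathbf{X}(\mathbf{V}^{+})^\top\bigr)\mathbf{V}^\top ,
\]
which is an element of $\mathcal{S}(\mathbf{U},\mathbf{V})$ with $\mathbf{S}=\mathbf{U}^{+}\mathbf{X}(\mathbf{V}^{+})^\top$.

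Steps 2 and 3 together give the full chain of equalities. As a corollary, the constraint set of \eqref{eq:subspace-opt} equals $\mathcal{M}$ for any admissible $(\widetilde{\mathbf{U}},\widetilde{\mathbf{V}})$. Optimizing over $\widetilde{\mathbf{S}}$ alone with fixed $(\mathbf{U},\mathbf{V})$ therefore attains the same infimum.
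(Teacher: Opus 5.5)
Your proof is correct and follows the same three-step structure as the paper's: change-of-basis matrices via a rank argument, basis-independence by substitution, then identification with $\mathcal{M}$. The only real difference is in the inclusion $\mathcal{M}\subseteq\mathcal{S}(\mathbf{U},\mathbf{V})$, where the paper routes through a basis matrix for $\mathbf{Q}_L^{(k_p)}$ and simply asserts that every element of $\mathcal{M}$ has the form $\mathbf{Q}_L^{(k_p)}\mathbf{T}(\mathbf{Q}_R^{(k_p)})^\top$; your projector identity $\mathbf{X}=\mathbf{P}_U\mathbf{X}\mathbf{P}_V^\top$ actually proves that factorization, and you also derive rather than assume that $\mathbf{U'}$ has full column rank.
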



\para{Implications.} \cref{the:subspace} implies that any update trajectory that
keeps $\mathrm{Span}(\mathbf{U_t})=\mathbf{Q}_{L}^{(k_p)}$ and $\mathrm{Span}(\mathbf{V}
_{t})=\mathbf{Q}_{R}^{(k_p)}$ merely changes coordinates inside $\mathcal{M}\!\left
(\mathbf{Q}_{L}^{(k_p)},\mathbf{Q}_{R}^{(k_p)}\right)$; consequently we can fix
any bases $\mathbf{U},\mathbf{V}$ spanning
$\mathbf{Q}_{L}^{(k_p)},\mathbf{Q}_{R}^{(k_p)}$ and optimize only the core:
\vspace{-1mm}
\begin{equation}
\small
    \label{eq:core-only}\min_{\mathbf{S}\in\mathbb{R}^{k_p\times k_p}}\ \mathcal{L}
    \!\left(\mathbf{U}\,\textbf{S}\,\mathbf{V}^{\top}\right).
\end{equation}
\vspace{-1mm}
Our preceding steps construct task-aligned $\mathbf{Q}_{L},\mathbf{Q}_{R}$,
determine $k_{p}$ via global rank allocation, and induce the working subspaces $\mathbf{Q}
_{L}^{(k_p)},\mathbf{Q}_{R}^{(k_p)}$ that capture directions of high downstream sensitivity.
Training only $\mathbf{S}$ explores the same solution set as jointly updating
$(\mathbf{U},\mathbf{S},\mathbf{V})$ under the constraints above, preserving the
task-specific geometry while yielding the efficiency of updating a single $k_{p}{\times}
k_{p}$ matrix without sacrificing expressivity inside the intended space.




    \section{Experiment}
\label{sec:exp}
\subsection{Setup and Baselines}
We evaluate JACTUS on both vision and language tasks.
For vision, we fine-tune ViT-Base/Large~\cite{dosovitskiy2021image} and Swin-Base/Large~\cite{liu2021Swin}
(pre-trained on ImageNet-21k~\cite{deng2009imagenet}) on eight image-classification benchmarks.
For language, we evaluate Llama2-7B~\cite{touvron2023llama2} on eight commonsense QA tasks~\cite{talmor-etal-2019-commonsenseqa}
and mathematical reasoning (MATH, GSM8K)~\cite{yu2023metamath}.
We report results under a retained-parameter budget $c$, i.e., the percentage of backbone parameters kept in the deployed model.

\para{Tasks and metrics.}
For vision, we report top-1 accuracy on \textsc{CIFAR-100}~\cite{krizhevsky2009learning}, \textsc{EuroSAT}~\cite{helber2019eurosat},
\textsc{RESISC45}~\cite{cheng2017remote}, \textsc{StanfordCars}~\cite{krause20133d},
\textsc{FGVC Aircraft}~\cite{maji2013fine}, \textsc{DTD}~\cite{cimpoi2014describing},
\textsc{CIFAR-10}~\cite{krizhevsky2009learning}, and \textsc{OxfordPets}~\cite{parkhi2012cats},
plus macro averages (\cref{tab:vit,tab:swin}).
In \cref{tab:vit,tab:swin}, we abbreviate \textsc{FGVC Aircraft} as \textsc{Aircraft},
\textsc{StanfordCars} as \textsc{Cars}, and \textsc{OxfordPets} as \textsc{Pets}.
For language, we evaluate BoolQ~\cite{clark2019boolq}, PIQA~\cite{bisk2020piqa}, SIQA~\cite{sap2019socialiqa}, HellaSwag~\cite{zellers2019hellaswag}, WinoGrande~\cite{sakaguchi2020winogrande}, ARC-easy~\cite{clark2018arc}, ARC-challenge~\cite{clark2018arc}, and OBQA~\cite{mihaylov2018openbookqa}
with accuracy and their average (\cref{tab:commonsenseqa}), and report exact-match on MATH~\cite{hendrycksmath2021} and GSM8K~\cite{cobbe2021gsm8k}.
(\cref{tab:metamath}).

\para{Baselines.}
We compare against strong PEFT methods at 100\% budget (LoRA~\cite{hu2022lora}, DoRA~\cite{liu2024dora},
SVFT~\cite{lingam2024svft}, PiSSA~\cite{meng2024pissa} for vision), and two-stage
compress-then-adapt pipelines that apply low-rank compression followed by LoRA under the same budget:
SVD-LLM~\cite{wang2025svdllm}+LoRA and Dobi-SVD~\cite{qinsi2025dobisvd}+LoRA.
On vision, we implement an SVD-LLM-style compress-then-adapt baseline for ViT/Swin; we omit a Dobi-SVD variant on vision due to non-trivial adaptation overhead.
JACTUS estimates task-aware statistics using a 1024-sample calibration set from the training split.
Unless otherwise stated, all methods are run with matched data splits and training recipes at each budget
(details in the supplement).

\para{Environment.}
All experiments use 4$\times$ NVIDIA L40 GPUs with PyTorch~\cite{paszke2019pytorch},
Transformers~\cite{wolf2020transformers}, and DeepSpeed~\cite{Rasley2020DeepSpeedSO}.
Pretrained backbones are from the official Hugging Face hub (model IDs in the supplement).
We use full-precision AdamW~\cite{Loshchilov2017DecoupledWD}.
For \textbf{JACTUS}, we set $\alpha=0.95$ and $\beta=\gamma=0.99$.
For PEFT baselines, we follow their default settings (LoRA/DoRA/PiSSA rank 8; SVFT uses $d=8$ as in~\cite{lingam2024svft}).
Other training details (batch size, learning-rate schedules, etc.) are provided in the supplementary material.

\begin{table*}
    [!t]
    \centering
    \setlength{\tabcolsep}{2.0pt}
    \caption{Fine-tuning on ViT-Base and ViT-Large~\cite{dosovitskiy2021image}. Top-1
    accuracy (\%).}
    \vspace{-.12in}
    \label{tab:vit}
    \resizebox{\textwidth}{!}{
    \begin{tabular}{l l l c c c c c c c c c}
        \toprule Backbone                                        & Budget & Method                     & CIFAR-100     & EuroSAT       & RESISC45      & Cars  & Aircraft & DTD           & CIFAR-10      & Pets    & Avg.          \\
        \midrule \multirow{10}{*}{ViT-Base}                      & 100\%      & LoRA~\cite{hu2022lora}     & 92.0          & 98.4          & 94.7          & 77.1          & 57.3          & 78.0          & 98.8          & 93.1          & 86.2          \\
                                                                 & 100\%      & DoRA~\cite{liu2024dora}    & \textbf{92.6} & \textbf{99.2} & 95.0          & 81.8          & 62.8          & 79.1          & \textbf{99.0} & 93.8          & 87.9          \\
                                                                 & 100\%      & PiSSA~\cite{meng2024pissa} & 92.2          & 99.2          & 95.5          & 79.1          & 62.6          & 81.7          & 98.6          & \textbf{95.9} & 88.1          \\
                                                                 & 100\%      & SVFT~\cite{lingam2024svft} & 91.4          & 98.5          & 94.4          & 77.5          & 60.2          & 81.8          & 98.7          & 92.5          & 86.9          \\
        \cmidrule(lr){2-12}                                      
                                              & 80\%       & SVD-LLM+LoRA               & 91.6          & 98.3          & 94.5          & 76.8          & 57.0          & 77.7          & 98.6          & 92.7          & 85.9          \\
        & 80\%       & \textbf{JACTUS}                     & 92.3          & 99.1          & \textbf{96.0} & \textbf{84.7} & \textbf{68.2} & \textbf{81.8} & 98.8          & \textbf{93.4} & \textbf{89.2} \\
        \cmidrule(lr){2-12}                                      
                                              & 60\%       & SVD-LLM+LoRA               & 90.1          & 95.9          & 92.2          & 71.5          & 50.3          & 74.9          & 97.6          & 90.2          & 82.8          \\
        & 60\%       & \textbf{JACTUS}                     & 91.6          & 98.5          & 95.3          & 78.9          & 63.2          & 79.8          & 98.6          & 91.6          & 87.2          \\
        \cmidrule(lr){2-12}                                      
                                              & 40\%       & SVD-LLM+LoRA               & 81.0          & 92.4          & 84.4          & 54.5          & 50.8          & 62.9          & 94.9          & 83.4          & 75.5         \\
        & 40\%       & \textbf{JACTUS}                     & 85.6          & 95.4          & 91.9          & 67.0          & 55.1          & 70.5          & 96.8          & 88.2          & 81.3          \\
        \specialrule{1.2pt}{1pt}{1pt} \multirow{10}{*}{ViT-Large} & 100\%      & LoRA~\cite{hu2022lora}     & \textbf{94.9} & 99.0          & 94.7          & 83.3          & 64.5          & 81.5          & 99.1          & 94.8          & 89.0          \\
                                                                 & 100\%      & DoRA~\cite{liu2024dora}    & 94.3          & 98.9          & 95.4          & 84.2          & 69.8          & 81.7          & 99.1          & 94.8          & 89.8          \\
                                                                 & 100\%      & PiSSA~\cite{meng2024pissa} & 93.6          & 98.6          & 95.7          & 86.7          & 70.6          & \textbf{82.8}          & 99.0          & \textbf{95.7} & 90.3          \\
                                                                 & 100\%      & SVFT~\cite{lingam2024svft} & 93.7          & 98.8          & 95.4          & 84.9          & 68.7          & 81.3          & \textbf{99.3} & 93.3          & 89.4          \\
        \cmidrule(lr){2-12}                                      
                                              & 80\%       & SVD-LLM+LoRA               & 93.6         & 98.7          & 94.2          & 82.6          & 64.0          & 80.8          & 99.0          & 94.2          & 88.4          \\
        & 80\%       & \textbf{JACTUS}                     & 94.6          & \textbf{99.3} & \textbf{96.4} & \textbf{87.8} & \textbf{71.5} & 82.6 & \textbf{99.3} & 95.2          & \textbf{90.8} \\
        \cmidrule(lr){2-12}                                      
                                              & 60\%       & SVD-LLM+LoRA               & 92.8          & 97.5          & 93.6          & 81.3          & 61.9          & 78.6          & 98.1          & 92.2          & 87.0          \\
        & 60\%       & \textbf{JACTUS}                     & 93.5          & 98.7          & 95.7          & 86.9          & 68.8          & 81.4          & 98.8          & 93.2          & 89.6          \\
        \cmidrule(lr){2-12}                                      
                                              & 40\%       & SVD-LLM+LoRA               & 86.3          & 93.0          & 91.7          & 72.5          & 56.5          & 71.7          & 95.0          & 85.8          & 81.6          \\
        & 40\%       & \textbf{JACTUS}                     & 87.8          & 96.0          & 93.2          & 77.7          & 60.3          & 73.9          & 96.9          & 89.4          & 84.4          \\
        \bottomrule
    \end{tabular}
    }
    \vspace{-.12in}
\end{table*}

\begin{table*}
    [!t]
    \centering
    \setlength{\tabcolsep}{2.0pt}
    \caption{Fine-tuning on Swin-Base and Swin-Large~\cite{liu2021Swin}. Top-1 accuracy
    (\%).}
    \vspace{-.12in}
    \label{tab:swin}
    \resizebox{\textwidth}{!}{
    \begin{tabular}{l l l c c c c c c c c c}
        \toprule Backbone                                         & Budget & Method                     & CIFAR-100     & EuroSAT       & RESISC45      & Cars  & Aircraft & DTD           & CIFAR-10      & Pets    & Avg.          \\
        \midrule \multirow{10}{*}{Swin-Base}                      & 100\%      & LoRA~\cite{hu2022lora}     & 92.1          & 98.7          & 94.4          & 81.6          & 68.9          & 77.8          & 98.8          & 93.8          & 88.3          \\
                                                                  & 100\%      & DoRA~\cite{liu2024dora}    & 92.8          & 99.2          & 94.2          & 82.0          & 69.7          & 78.5          & \textbf{98.9} & 93.1          & 88.6          \\
                                                                  & 100\%      & PiSSA~\cite{meng2024pissa} & 93.1          & \textbf{99.3} & 94.5          & 82.1          & 70.4          & 79.2          & \textbf{98.9} & \textbf{93.3} & 88.9          \\
                                                                  & 100\%      & SVFT~\cite{lingam2024svft} & 92.0          & 99.1          & 94.4          & 82.1          & 69.9          & 80.1          & 98.6          & 93.3          & 88.7          \\
        \cmidrule(lr){2-12}                                       
                                               & 80\%       & SVD-LLM+LoRA               & 91.7          & 97.9          & 93.9          & 80.4          & 68.3          & 77.4          & 98.3         & 92.5          & 87.6          \\
        & 80\%       & \textbf{JACTUS}                     & \textbf{93.3} & 99.1          & \textbf{95.9} & \textbf{82.2} & \textbf{72.2} & \textbf{82.2} & \textbf{98.9} & 93.1          & \textbf{89.6} \\
        \cmidrule(lr){2-12}                                       
                                               & 60\%       & SVD-LLM+LoRA               & 90.3          & 96.9          & 92.5          & 79.7          & 68.0          & 76.5          & 97.1          & 90.8          & 86.5         \\
        & 60\%       & \textbf{JACTUS}                     & 92.2          & 98.7          & 95.0          & 81.8          & 71.9          & 81.6          & 98.2          & 92.4          & 89.0          \\
        \cmidrule(lr){2-12}                                       
                                               & 40\%       & SVD-LLM+LoRA               & 86.5          & 92.6          & 91.5          & 72.0          & 59.8          & 70.8          & 94.1          & 86.6          & 81.7          \\
        & 40\%       & \textbf{JACTUS}                     & 88.2          & 96.4          & 93.6          & 76.9          & 64.2          & 74.6          & 96.8          & 90.7          & 85.2          \\
        \specialrule{1.2pt}{1pt}{1pt} \multirow{10}{*}{Swin-Large} & 100\%      & LoRA~\cite{hu2022lora}     & 94.9          & 99.1          & 96.3          & 83.3          & 73.9          & 83.8          & 98.9          & 94.3          & 90.6          \\
                                                                  & 100\%      & DoRA~\cite{liu2024dora}    & 94.8          & 99.1          & 96.6          & 84.6          & 73.6          & 84.1          & 99.0          & 94.3          & 90.8          \\
                                                                  & 100\%      & PiSSA~\cite{meng2024pissa} & 94.6          & \textbf{99.3} & 96.5          & 84.0          & 74.1          & 84.2          & \textbf{99.1} & \textbf{94.5} & 90.8          \\
                                                                  & 100\%      & SVFT~\cite{lingam2024svft} & 94.2          & 99.1          & 95.8          & 83.2          & 73.8          & 84.1          & 98.9          & 94.3          & 90.4          \\
        \cmidrule(lr){2-12}                                       
                                               & 80\%       & SVD-LLM+LoRA               & 94.4          & 98.7          & 95.8          & 82.6          & 73.0          & 83.1          & 98.8          & 94.0          & 90.1          \\
        & 80\%       & \textbf{JACTUS}                     & \textbf{95.2} & 99.2          & \textbf{96.7} & \textbf{86.2} & \textbf{74.2} & \textbf{84.4} & \textbf{99.1} & 94.4          & \textbf{91.2} \\
        \cmidrule(lr){2-12}                                       
                                               & 60\%       & SVD-LLM+LoRA               & 93.4          & 98.2          & 94.9          & 81.3          & 71.6          & 81.9          & 98.0          & 92.3          & 89.0          \\
        & 60\%       & \textbf{JACTUS}                     & 94.0          & 98.5          & 95.3          & 84.7          & 72.9          & 83.1          & 98.7          & 93.3          & 90.1          \\
        \cmidrule(lr){2-12}                                       
                                               & 40\%       & SVD-LLM+LoRA               & 88.5          & 95.7          & 92.2          & 73.0          & 58.5          & 73.8          & 97.1          & 90.1         & 82.9          \\
        & 40\%       & \textbf{JACTUS}                     & 91.7          & 97.3          & 93.6          & 78.8          & 65.0          & 77.5          & 97.6          & 92.1          & 86.7          \\
        \bottomrule
    \end{tabular}
    }
    \vspace{-.25in}
\end{table*}

\subsection{Image Classification Results}
\label{sec:image}

\para{Main results.}
At an 80\% retained-parameter budget, JACTUS consistently outperforms the strongest 100\% PEFT baselines on all four backbones. Specifically, JACTUS reaches 89.2/90.8 average accuracy on ViT-Base/Large (\cref{tab:vit}) and 89.6/91.2 on Swin-Base/Large (\cref{tab:swin}),
exceeding the best 100\% PEFT counterparts (ViT-B: 88.1, ViT-L: 90.3; Swin-B: 88.9, Swin-L: 90.8).
As the budget tightens to 60\%, performance degrades gracefully and remains competitive with full-parameter PEFT (e.g., JACTUS stays above LoRA-100\% on ViT-Base/Large and Swin-Base, and is close on Swin-Large; see \cref{tab:vit,tab:swin}). At 40\%, the drop becomes more pronounced (e.g., 81.3 on ViT-Base), reflecting the expected difficulty under very tight retained-parameter budgets.

\para{Comparison to a two-stage compress-then-adapt baseline.}
Compared with SVD-LLM~\cite{wang2025svdllm}+LoRA under the same budgets, JACTUS is consistently better across backbones (\cref{tab:vit,tab:swin}), and the advantage increases as $c$ decreases.
For example, on ViT-Base the average gain grows from +3.3 at 80\% (89.2 vs 85.9) to +4.4 at 60\% (87.2 vs 82.8) and +5.8 at 40\% (81.3 vs 75.5). Similar trends of gap widdening hold for ViT-Large and Swin models (see \cref{tab:vit,tab:swin}). Overall, these results indicate that decoupling compression and adaptation amplifies subspace mismatch under tighter budgets, whereas JACTUS mitigates it by jointly aligning retained directions with task-sensitive signals.

\subsection{Scaling Up to Large Language Models}
\label{sec:llm} \textbf{Commonsense QA.}\quad At 80\% retained-parameter budget,
JACTUS achieves an averaged 80.9\% across eight tasks (\cref{tab:commonsenseqa}),
surpassing 100\% PEFT baselines (DoRA~\cite{liu2024dora} 79.7\%, SVFT~\cite{lingam2024svft}
78.5\%, LoRA~\cite{hu2022lora} 77.6\%). The improvement is broad-based, with a
notable margin on HellaSwag (JACTUS 93.0\% \vs DoRA 89.1\%). At 60\%
and 40\% retained-parameter budget, JACTUS remains robust (75.5\%, 58.9\%) and clearly
outperforms two-stage SVD-LLM+LoRA~\cite{wang2025svdllm} (68.0\%, 51.7\%) and
Dobi-SVD+LoRA~\cite{qinsi2025dobisvd} (71.5\%, 52.3\%).

\noindent
\begin{minipage}[t]{0.56\textwidth}
\vspace{0pt}
\textbf{Mathematical reasoning.} 
On MetaMath~\cite{yu2023metamath} fine-tuning, JACTUS
at 80\% reaches 13.8\% on MATH and 62.5\% on GSM8K (\cref{tab:metamath}),
outperforming 100\% PiSSA~\cite{meng2024pissa} (7.5\% / 53.2\%) and 100\% LoRA~\cite{hu2022lora}
(5.9\% / 43.2\%). As the budget tightens to 60\% and 40\%, performance
scales smoothly (11.1\% / 51.5\% and 8.8\% / 46.2\%), remaining competitive with
100\% PEFT overall.
\end{minipage}
\hspace{0.02\linewidth}
\begin{minipage}[t]{0.42\textwidth} 
\centering
\vspace{-10pt}
    \captionof{table}{MetaMath results on Llama2-7B (Accuracy \%, pass@1). LoRA and PiSSA results are from~\cite{meng2024pissa}.}
    \label{tab:metamath}
    \vspace{-5pt} 
    \vtop{\hbox{
    \resizebox{0.8\linewidth}{!}{
    \begin{tabular}[t]{l l c c}
        \toprule 
        Budget & Method & MATH & GSM8K \\
        \midrule 
        100\% & LoRA~\cite{hu2022lora} & 5.9 & 43.2 \\
        100\% & PiSSA~\cite{meng2024pissa} & 7.5 & 53.2 \\
        \midrule 
        80\% & \textbf{JACTUS} & \textbf{13.8} & \textbf{62.5} \\
        60\% & \textbf{JACTUS} & 11.1 & 51.5 \\
        40\% & \textbf{JACTUS} & 8.8 & 46.2 \\
        \bottomrule
    \end{tabular}
    }
    }}
\end{minipage}

\begin{table*}
    [!t]
    \centering
    \setlength{\tabcolsep}{4.5pt}
    \caption{CommonsenseQA on Llama2-7B~\cite{touvron2023llama2}. Accuracy (\%).
    Results of LoRA and DoRA are taken from DoRA~\cite{liu2024dora}.}
    \vspace{-.12in}
    \label{tab:commonsenseqa} \resizebox{\textwidth}{!}{
    \begin{tabular}{l l c c c c c c c c c}
        \toprule Budget & Method                                & BoolQ         & PIQA          & SIQA          & HellaSwag     & WinoGrande    & ARC-e         & ARC-c         & OBQA          & Avg.          \\
        \midrule 100\%      & LoRA~\cite{hu2022lora}                & 69.8          & 79.9          & 79.5          & 83.6          & 82.6          & 79.8          & 64.7          & 81.0          & 77.6          \\
        100\%               & DoRA~\cite{liu2024dora}               & \textbf{71.8} & \textbf{83.7} & 76.0          & 89.1          & 82.6          & 83.7          & 68.2          & \textbf{82.4} & 79.7          \\
        100\%               & SVFT~\cite{lingam2024svft}            & 71.2          & 81.0          & 76.3          & 89.6          & 81.1          & 81.8          & 66.5          & 81.4          & 78.5          \\
        \midrule 
        80\%                & SVD-LLM~\cite{wang2025svdllm}+LoRA    & 68.9          & 80.2          & 76.9          & 81.6          & 82.1          & 79.6          & 63.2          & 79.1          & 76.5          \\
        80\%                & Dobi-SVD~\cite{qinsi2025dobisvd}+LoRA & 69.2          & 81.7          & 77.6          & 82.1          & 82.5          & 78.3          & 64.3          & 79.7          & 76.9          \\
        80\%       & \textbf{JACTUS}                       & 70.7          & 82.9          & \textbf{80.6} & \textbf{93.0} & \textbf{83.3} & \textbf{85.4} & \textbf{71.7} & 79.6          & \textbf{80.9} \\
        \midrule 
        60\%                & SVD-LLM~\cite{wang2025svdllm}+LoRA    & 65.2          & 69.4          & 67.4          & 68.3          & 74.5          & 75.6          & 53.2          & 70.4          & 68.0          \\
        60\%                & Dobi-SVD~\cite{qinsi2025dobisvd}+LoRA & 66.0          & 70.7          & 72.0          & 75.3          & 78.0          & 78.7          & 58.9          & 72.6          & 71.5          \\
        60\%       & \textbf{JACTUS}                       & \textbf{66.8} & \textbf{79.8} & \textbf{77.4} & \textbf{87.8} & \textbf{78.5} & \textbf{76.9} & \textbf{61.5} & \textbf{75.4} & \textbf{75.5} \\
        \midrule 
        40\%                & SVD-LLM~\cite{wang2025svdllm}+LoRA    & 62.2          & 64.3          & 56.8          & 53.1          & 59.1          & 44.2          & 32.9          & 40.7          & 51.7          \\
        40\%                & Dobi-SVD~\cite{qinsi2025dobisvd}+LoRA & 64.4          & 61.8          & 64.3          & 50.5          & 59.3          & 47.0          & 30.8          & 40.0          & 52.3          \\
        40\%       & \textbf{JACTUS}                       & \textbf{62.6} & \textbf{64.4} & \textbf{66.6} & \textbf{61.1} & \textbf{60.0} & \textbf{56.9} & \textbf{43.0} & \textbf{56.6} & \textbf{58.9} \\
        \bottomrule
    \end{tabular}
    }
    \vspace{-.15in}
\end{table*}

\begin{table*}[t]
    \centering
    \setlength{\tabcolsep}{3.0pt}
    \renewcommand{\arraystretch}{1.05}

    \begin{minipage}[t]{0.95\textwidth}
        \centering

        \captionof{table}{
        Ablation of subspace choice at 60\% parameters on  CommonsenseQA.
        }
        \label{tab:ablate-subspace}
        \vspace{-0.08in}
        \resizebox{\linewidth}{!}{
        \begin{tabular}{l c c c c c c c c c}
            \toprule Method          & BoolQ         & PIQA          & SIQA          & HSwag.        & WG.           & ARC-e         & ARC-c         & OBQA          & Avg.          \\
            \midrule 
            Weight-Only              & 64.8          & 70.7          & 71.1          & 83.4          & 72.7          & 68.0          & 55.1          & 70.2          & 69.5          \\
            Task-Only                & 62.1          & 55.0          & 43.3          & 45.6          & 57.8          & 53.9          & 34.6          & 46.2          & 49.8          \\
            \textbf{JACTUS} & \textbf{66.8} & \textbf{79.8} & \textbf{77.4} & \textbf{87.8} & \textbf{78.5} & \textbf{76.9} & \textbf{61.5} & \textbf{75.4} & \textbf{75.5} \\
            \bottomrule
        \end{tabular}}
        \vspace{0.10in}

        \captionof{table}{Ablation of calibration set size at 60\% parameters on
        CommonsenseQA.}
        \label{tab:ablate-calib}
        \vspace{-0.08in}
        \resizebox{\linewidth}{!}{
        \begin{tabular}{l c c c c c c c c c}
            \toprule Size of Calib. & BoolQ         & PIQA          & SIQA          & HSwag.        & WG.           & ARC-e         & ARC-c         & OBQA          & Avg.          \\
            \midrule 512            & 65.3          & 77.0          & 76.8          & 85.6          & 76.0          & 73.8          & 61.3          & 74.6          & 73.8          \\
            \textbf{1024}           & \textbf{66.8} & \textbf{79.8} & 77.4          & \textbf{87.8} & 78.5          & \textbf{76.9} & 61.5          & \textbf{75.4} & \textbf{75.5} \\
            2048                    & 66.5          & 78.9          & \textbf{78.2} & 86.8          & \textbf{79.3} & 75.4          & \textbf{62.2} & 74.1          & 75.2          \\
            \bottomrule
        \end{tabular}
        }
        \vspace{0.10in}

        \captionof{table}{Ablation of rank allocation at 60\% parameters on CommonsenseQA.}
        \label{tab:ablate-alloc} 
        \vspace{-0.08in}
        \resizebox{\linewidth}{!}{
            \begin{tabular}{l c c c c c c c c c}
                \toprule Setting            & BoolQ         & PIQA          & SIQA          & HSwag.        & WG.           & ARC-e         & ARC-c         & OBQA          & Avg.          \\
                \midrule \textbf{Layerwise} & \textbf{66.8} & \textbf{79.8} & \textbf{77.4} & \textbf{87.8} & \textbf{78.5} & \textbf{76.9} & \textbf{61.5} & \textbf{75.4} & \textbf{75.5} \\
                \textbf{Uniform}            & 65.5          & 72.0          & 73.1          & 84.4          & 73.9          & 72.9          & 55.2          & 73.0          & 71.3          \\
                \bottomrule
            \end{tabular}
            }
    \vspace{-.2in}
    \end{minipage}

    \vspace{-0.10in}
\end{table*}

\subsection{Ablation Studies}
\label{sec:ablation} We ablate the role of the joint subspace, calibration size,
and rank allocation. Unless specified, the results are reported on CommonsenseQA~\cite{talmor-etal-2019-commonsenseqa}.

\para{Weight \vs task \vs joint subspace.} We compare (i) \textbf{Weight-Only} (pretrained
weight subspace), (ii) \textbf{Task-Only} (task gradient subspace), and (iii)
\textbf{JACTUS} (their orthogonal union) at 60\% retained-parameter budget
(see \cref{tab:ablate-subspace}). Averaged over tasks, JACTUS achieves 75.5\%,
while Weight-Only yields 69.5\% and Task-Only 49.8\%. The joint subspace
strictly contains the two single-source subspaces as special cases, hence it can
retain stable pretrained directions while admitting task-specific adaptations;
this explains both the sizable gap over Task-Only and the consistent
improvements over Weight-Only on harder datasets (\eg, HellaSwag, ARC-c)~\cite{talmor-etal-2019-commonsenseqa}.

\para{Calibration set size.} Varying the calibration size among \{512, 1024,
2048\} at 60\% retained-parameter budget \cref{tab:ablate-calib} shows that moderate
calibration is sufficient: accuracy rises from 73.8\% (512) to a peak of 75.5\%
(1024), and remains 75.2\% at 2048. Beyond $\sim$1k examples the returns become diminishing
and fluctuate slightly, indicating that the estimated task statistics are stable
once a modest coverage of the input/gradient distribution is reached. This property
is desirable for practical deployments where collecting a small, representative
calibration set is much cheaper than full dataset fine-tuning.

\para{Rank allocation: layer-wise \vs uniform.} We control for the overall budget
and compare uniform rank allocation (fixed 60\% per matrix) against layer-wise
allocation guided by task-aware statistics in \cref{tab:ablate-alloc}. Layer-wise JACTUS
improves the average from 71.3\% (Uniform) to 75.5\%. This indicates that our
cost-aware global rank allocation can concentrate rank on matrices with higher
effective energy and larger marginal gains for downstream objectives, thereby
supporting the effectiveness of the proposed approach.
    \section{Conclusion}
\label{sec:conclusion}

We presented \textbf{JACTUS}, a unified framework for joint task-aware adaptation and model compression. JACTUS bridges PEFT and post-hoc compression by constructing an orthogonal union of subspaces that combines the pre-trained weight structure with task-sensitive directions estimated from a small calibration set via task-aware statistics $(\mathbf{C}_{x}, \mathbf{C}_{g})$.
Within this joint subspace, we perform a projected low-rank approximation, allocate global ranks under a fixed parameter budget using a cost-aware marginal-gain strategy, and fine-tune only the compact core matrices. Across architectures (ViT, Swin, Llama2-7B) and modalities (vision, commonsense, math reasoning), JACTUS consistently improves accuracy at the same retained-parameter budget, including in high-compression regimes. Overall, our results suggest that compression and adaptation are best treated as a single optimization problem within a task-aware representation space.

\para{Future work.}
We plan to incorporate $(\mathbf{C}_{x}, \mathbf{C}_{g})$ more directly into the objective, enabling online updates of the subspaces and rank allocation during training rather than only at initialization. We also aim to design a statistics-driven global allocator that uses statistical sensitivity as a proxy for marginal gain, moving toward end-to-end task-aware joint compression and adaptation.
    
    \bibliographystyle{splncs04}
    \bibliography{main}

    \clearpage

    \appendix
    \renewcommand{\thetable}{A\arabic{table}} 
    \renewcommand{\thefigure}{A\arabic{figure}} 
    \renewcommand{\theequation}{A\arabic{equation}} 
    \setcounter{table}{0}
    \setcounter{figure}{0}
    \setcounter{equation}{0}
\section*{Appendix A: Basis Invariance and Core-only Equivalence}

\para{Subspace invariance and parameterization equivalence.} 
Let $\mathbf{U}\in\mathbb{R}^{d_{\mathrm{out}}\times k_p}$ and $\mathbf{V}\in\mathbb{R}^{d_{\mathrm{in}}\times k_p}$ have full column rank, and define
$\mathcal{S}(\mathbf{U},\mathbf{V})=\{\mathbf{USV}^{\top}: \mathbf{S}\in\mathbb{R}^{k_p\times k_p}\}$. If $\mathbf{U'},\mathbf{V'}$ satisfy
\[
\begin{aligned}
\mathrm{Span}(\mathbf{U'})&=\mathrm{Span}(\mathbf{U})=\mathbf{Q}_L^{(k_p)},\\
\text{and}\quad
\mathrm{Span}(\mathbf{V'})&=\mathrm{Span}(\mathbf{V})=\mathbf{Q}_R^{(k_p)},
\end{aligned}
\]
then there exist invertible $\mathbf{R}_L\in\mathbb{R}^{k_p\times k_p}$ and $\mathbf{R}_R\in\mathbb{R}^{k_p\times k_p}$ with
$\mathbf{U'}=\mathbf{U} \mathbf{R}_L$ and $\mathbf{V'}=\mathbf{V} \mathbf{R}_R$, and
\[
\mathcal{S}(\mathbf{U'},\mathbf{V'}) \;=\; \mathcal{S}(\mathbf{U},\mathbf{V}) \;=\; \mathcal{M}\!\left(\mathbf{Q}_L^{(k_p)},\mathbf{Q}_R^{(k_p)}\right).
\] 

\begin{proof}

Since $\mathrm{Span}(\mathbf{U'})=\mathrm{Span}(\mathbf{U})$ and both have full column rank $k_p$, there exists an invertible $\mathbf{R}_L\in\mathbb{R}^{k_p\times k_p}$ such that $\mathbf{U'}=\mathbf{U}\mathbf{R}_L$. Indeed, each column of $\mathbf{U'}$ is a linear combination of the columns of $\mathbf{U}$, so $\mathbf{U'}=\mathbf{U}\mathbf{R}_L$ for some $\mathbf{R}_L$; if $\mathbf{R}_L$ were singular, then $\mathrm{rank}(\mathbf{U'})\le \mathrm{rank}(\mathbf{R}_L)<k_p$, contradicting full column rank. The same argument gives an invertible $\mathbf{R}_R$ with $\mathbf{V'}=\mathbf{V}\mathbf{R}_R$.

We first show $\mathcal{S}(\mathbf{U'},\mathbf{V'})=\mathcal{S}(\mathbf{U},\mathbf{V})$. For any $\mathbf{S'}$,
\[
\mathbf{U'}\mathbf{S'}\mathbf{V'}^\top
= \mathbf{U}\mathbf{R}_L \mathbf{S'} (\mathbf{V}\mathbf{R}_R)^\top
= \mathbf{U}\bigl(\mathbf{R}_L \mathbf{S'} \mathbf{R}_R^\top\bigr)\mathbf{V}^\top,
\]
so $\mathcal{S}(\mathbf{U'},\mathbf{V'})\subseteq\mathcal{S}(\mathbf{U},\mathbf{V})$. Conversely, for any $\mathbf{S}$,
\[
\mathbf{U}\mathbf{S}\mathbf{V}^\top
= \mathbf{U'}\bigl(\mathbf{R}_L^{-1}\mathbf{S}(\mathbf{R}_R^{-1})^\top\bigr)\mathbf{V'}^\top,
\]
so $\mathcal{S}(\mathbf{U},\mathbf{V})\subseteq\mathcal{S}(\mathbf{U'},\mathbf{V'})$, proving equality.

Next, since $\mathrm{Span}(\mathbf{U})=\mathrm{Span}(\mathbf{Q}_L^{(k_p)})$ and both have full column rank, there exists an invertible $\mathbf{A}_L$ with $\mathbf{U}=\mathbf{Q}_L^{(k_p)}\mathbf{A}_L$, and similarly an invertible $\mathbf{A}_R$ with $\mathbf{V}=\mathbf{Q}_R^{(k_p)}\mathbf{A}_R$. Then for any $\mathbf{S}$,
\[
\mathbf{U}\mathbf{S}\mathbf{V}^\top
=  \mathbf{Q}_L^{(k_p)}\bigl(\mathbf{A}_L\mathbf{S}\mathbf{A}_R^\top\bigr)\bigl(\mathbf{Q}_R^{(k_p)}\bigr)^\top,
\]
hence $\mathcal{S}(\mathbf{U},\mathbf{V})\subseteq\mathcal{M}(\mathbf{Q}_L^{(k_p)},\mathbf{Q}_R^{(k_p)})$. Conversely, any element of $\mathcal{M}(\mathbf{Q}_L^{(k_p)},\mathbf{Q}_R^{(k_p)})$ has the form
\[
\mathbf{Q}_L^{(k_p)}\mathbf{T}\bigl(\mathbf{Q}_R^{(k_p)}\bigr)^\top
= \mathbf{U}\bigl(\mathbf{A}_L^{-1}\mathbf{T}(\mathbf{A}_R^{-1})^\top\bigr)\mathbf{V}^\top,
\]
so lies in $\mathcal{S}(\mathbf{U},\mathbf{V})$. Thus $\mathcal{S}(\mathbf{U},\mathbf{V})=\mathcal{M}(\mathbf{Q}_L^{(k_p)},\mathbf{Q}_R^{(k_p)})$, and combining with the first part gives the claimed equalities.

\end{proof}

\begin{table}[t]
\centering
\setlength{\tabcolsep}{6pt}
\caption{Vision benchmarks used in experiments: number of classes and split sizes.}
\label{tab:dataset-vision}
\begin{tabular}{lccc}
\toprule
Dataset & \#Classes & \#Train & \#Test \\
\midrule
\textsc{CIFAR-100}~\cite{krizhevsky2009learning} & 100 & 50{,}000 & 10{,}000 \\
\textsc{EuroSAT}~\cite{helber2019eurosat}        & 10  & 10{,}000 & 5{,}000  \\
\textsc{RESISC45}~\cite{cheng2017remote}        & 45  & 25{,}200 & 6{,}300  \\
\textsc{StanfordCars}~\cite{krause20133d}       & 196 & 8{,}144  & 8{,}041  \\
\textsc{FGVC Aircraft}~\cite{maji2013fine}      & 100 & 6{,}667  & 3{,}333  \\
\textsc{DTD}~\cite{cimpoi2014describing}        & 47  & 3{,}760  & 1{,}880  \\
\textsc{CIFAR-10}~\cite{krizhevsky2009learning} & 10  & 50{,}000 & 10{,}000 \\
\textsc{OxfordPets}~\cite{parkhi2012cats}       & 37  & 3{,}680  & 3{,}669  \\
\bottomrule
\end{tabular}
\end{table}

\begin{table}[t]
\centering
\setlength{\tabcolsep}{6pt}
\caption{Language generation benchmarks used in our experiments: training sources and evaluation split sizes.}
\label{tab:dataset-reasoning}
\begin{tabular}{lccc}
\toprule
Dataset & \#Train & \#Test \\
\midrule
\multicolumn{3}{l}{\textbf{CommonsenseQA fine-tuning experiment}} \\
\textsc{CommonsenseQA}~\cite{talmor-etal-2019-commonsenseqa}
  & 9{,}741 & 1{,}140 \\
\textsc{BoolQ}~\cite{clark2019boolq}
  & -- & 3{,}270 \\
\textsc{PIQA}~\cite{bisk2020piqa}
  & -- & 1{,}838 \\
\textsc{SIQA}~\cite{sap2019socialiqa}
  & -- & 1{,}954 \\
\textsc{HellaSwag}~\cite{zellers2019hellaswag}
  & -- & 10{,}042 \\
\textsc{WinoGrande}~\cite{sakaguchi2020winogrande}
  & -- & 1{,}267 \\
\textsc{ARC-Easy}~\cite{clark2018arc}
  & -- & 2{,}376 \\
\textsc{ARC-Challenge}~\cite{clark2018arc}
  & -- & 1{,}172 \\
\textsc{OBQA}~\cite{mihaylov2018openbookqa}
  & -- & 500 \\
\midrule
\multicolumn{3}{l}{\textbf{MetaMath fine-tuning experiment}} \\
\textsc{MetaMath}~\cite{yu2023metamath}
  & 395{,}000 & -- \\
\textsc{GSM8K}~\cite{cobbe2021gsm8k}
  & -- & 1{,}319 \\
\textsc{MATH}~\cite{hendrycksmath2021}
  & -- & 5{,}000 \\
\bottomrule
\end{tabular}
\end{table}

\begin{table*}[t]
\centering
\setlength{\tabcolsep}{4.0pt}
\caption{Hyperparameters for ViT-Base/Large~\cite{dosovitskiy2021image} fine-tuning on a single NVIDIA L40 GPU. Batch size is the global batch size and is shared across methods and compression rates for each backbone–dataset pair.}
\label{tab:vit_hparams}
\resizebox{\linewidth}{!}{
\begin{tabular}{l l c c c c c}
\toprule
Backbone & Dataset & Batch size & Learning rate & Warmup ratio & Weight decay & Epochs \\
\midrule
\multirow{8}{*}{ViT-Base}
& \textsc{CIFAR-100}   & 32 & $2\times 10^{-4}$ & 0.06 & 0.05 & 15 \\
& \textsc{EuroSAT}     & 32 & $1\times 10^{-4}$ & 0.06 & 0.05 & 12 \\
& \textsc{RESISC45}    & 32 & $5\times 10^{-4}$ & 0.06 & 0.05 & 12 \\
& \textsc{StanfordCars}&  32 & $5\times 10^{-4}$ & 0.10 & 0.05 & 15 \\
& \textsc{FGVC Aircraft} & 32 & $5\times 10^{-4}$ & 0.10 & 0.05 & 15 \\
& \textsc{DTD}         &  32 & $5\times 10^{-4}$ & 0.10 & 0.05 & 12 \\
& \textsc{CIFAR-10}    & 32 & $2\times 10^{-4}$ & 0.05 & 0.05 & 10 \\
& \textsc{OxfordPets}  &  32 & $2\times 10^{-4}$ & 0.10 & 0.05 & 12 \\
\midrule
\multirow{8}{*}{ViT-Large}
& \textsc{CIFAR-100}   &  32 & $1\times 10^{-4}$ & 0.06 & 0.05 & 15 \\
& \textsc{EuroSAT}     &  32 & $7\times 10^{-5}$ & 0.06 & 0.05 & 12 \\
& \textsc{RESISC45}    &  32 & $2\times 10^{-4}$ & 0.06 & 0.05 & 12 \\
& \textsc{StanfordCars}&  32 & $2\times 10^{-4}$ & 0.10 & 0.05 & 15 \\
& \textsc{FGVC Aircraft} & 32 & $2\times 10^{-4}$ & 0.10 & 0.05 & 15 \\
& \textsc{DTD}         &  32 & $2\times 10^{-4}$ & 0.10 & 0.05 & 12 \\
& \textsc{CIFAR-10}    &  32 & $1\times 10^{-4}$ & 0.05 & 0.05 & 10 \\
& \textsc{OxfordPets}  &  32 & $1\times 10^{-4}$ & 0.10 & 0.05 & 12 \\
\bottomrule
\end{tabular}
}
\end{table*}

\begin{table*}[t]
\centering
\setlength{\tabcolsep}{4.0pt}
\caption{Hyperparameters for Swin-Base/Large~\cite{liu2021Swin} fine-tuning on a single NVIDIA L40 GPU. Batch size is the global batch size and is shared across methods and compression rates for each backbone–dataset pair.}
\label{tab:swin_hparams}
\resizebox{\linewidth}{!}{
\begin{tabular}{l l c c c c c}
\toprule
Backbone & Dataset & Batch size & Learning rate & Warmup ratio & Weight decay & Epochs \\
\midrule
\multirow{8}{*}{Swin-Base}
& \textsc{CIFAR-100}   & 32 & $1\times 10^{-4}$ & 0.06 & 0.05 & 15 \\
& \textsc{EuroSAT}     & 32 & $1\times 10^{-4}$ & 0.06 & 0.05 & 12 \\
& \textsc{RESISC45}    & 32 & $1\times 10^{-4}$ & 0.06 & 0.05 & 12 \\
& \textsc{StanfordCars}&  32 & $2\times 10^{-4}$ & 0.10 & 0.05 & 15 \\
& \textsc{FGVC Aircraft} & 32 & $2\times 10^{-4}$ & 0.10 & 0.05 & 15 \\
& \textsc{DTD}         &  32 & $2\times 10^{-4}$ & 0.10 & 0.05 & 12 \\
& \textsc{CIFAR-10}    & 32 & $1\times 10^{-4}$ & 0.05 & 0.05 & 10 \\
& \textsc{OxfordPets}  &  32 & $1\times 10^{-4}$ & 0.10 & 0.05 & 12 \\
\midrule
\multirow{8}{*}{Swin-Large}
& \textsc{CIFAR-100}   &  32 & $7\times 10^{-5}$ & 0.06 & 0.05 & 15 \\
& \textsc{EuroSAT}     &  32 & $5\times 10^{-5}$ & 0.06 & 0.05 & 12 \\
& \textsc{RESISC45}    &  32 & $5\times 10^{-5}$ & 0.06 & 0.05 & 12 \\
& \textsc{StanfordCars}&  32 & $1\times 10^{-4}$ & 0.10 & 0.05 & 15 \\
& \textsc{FGVC Aircraft} & 32 & $1\times 10^{-4}$ & 0.10 & 0.05 & 15 \\
& \textsc{DTD}         &  32 & $1\times 10^{-4}$ & 0.10 & 0.05 & 12 \\
& \textsc{CIFAR-10}    &  32 & $7\times 10^{-5}$ & 0.05 & 0.05 & 10 \\
& \textsc{OxfordPets}  &  32 & $7\times 10^{-5}$ & 0.10 & 0.05 & 12 \\
\bottomrule
\end{tabular}
}
\end{table*}

\begin{table}[!t]
\centering
\setlength{\tabcolsep}{3.5pt}
\caption{Hyperparameters for Llama2-7B~\cite{touvron2023llama2} fine-tuning on CommonsenseQA~\cite{talmor-etal-2019-commonsenseqa} under different compression rates. All runs use a global batch size of 128, warmup ratio 0.03, weight decay 0, maximum sequence length 1024, and \texttt{bf16} training.}
\label{tab:csqa_hparams}
\begin{tabular}{l l c c}
\toprule
Comp. Rate & Method & Learning rate & Epochs \\
\midrule
100\% & \textbf{SVFT~\cite{lingam2024svft}}            & $5\times 10^{-4}$ & 1 \\
80\%  & \textbf{JACTUS}         & $1\times 10^{-4}$ & 1 \\
80\%  & \textbf{SVD-LLM~\cite{wang2025svdllm}+LoRA~\cite{hu2022lora}}   & $2\times 10^{-4}$ & 1 \\
80\%  & \textbf{Dobi\text{-}SVD~\cite{qinsi2025dobisvd}+LoRA} & $2\times 10^{-4}$ & 1 \\
60\%  & \textbf{JACTUS}         & $2\times 10^{-4}$ & 2 \\
60\%  & \textbf{SVD-LLM+LoRA}   & $4\times 10^{-4}$ & 2 \\
60\%  & \textbf{Dobi\text{-}SVD+LoRA} & $4\times 10^{-4}$ & 2 \\
40\%  & \textbf{JACTUS}         & $2\times 10^{-4}$ & 2 \\
40\%  & \textbf{SVD-LLM+LoRA}   & $4\times 10^{-4}$ & 2 \\
40\%  & \textbf{Dobi\text{-}SVD+LoRA} & $4\times 10^{-4}$ & 2 \\
\bottomrule
\end{tabular}
\end{table}

\begin{table}[!t]
\centering
\setlength{\tabcolsep}{6pt}
\caption{Hyperparameters for JACTUS on MetaMATH~\cite{yu2023metamath} with Llama2-7B~\cite{touvron2023llama2}. All runs use a global batch size of 128, warmup ratio 0.03, weight decay 0, maximum sequence length 1024, \texttt{bf16} training, cosine learning rate schedule.}
\label{tab:metamath_hparams}
\begin{tabular}{l l c c}
\toprule
Comp. Rate & Method & Learning rate & Epochs \\
\midrule
80\% & \textbf{JACTUS} & $1\times 10^{-4}$ & 1 \\
60\% & \textbf{JACTUS} & $2\times 10^{-4}$ & 2 \\
40\% & \textbf{JACTUS} & $2\times 10^{-4}$ & 2 \\
\bottomrule
\end{tabular}
\end{table}

\section*{Appendix B: Experimental Details and Hyperparameters}

\subsection*{Vision benchmarks}

For the vision experiments, we follow a standard image classification protocol on eight benchmarks. The number of classes and the train/test split sizes for each dataset are summarized in \cref{tab:dataset-vision}. For the language generation experiments, we conduct two separate fine-tuning protocols.
In the first protocol, we fine-tune on the CommonsenseQA~\cite{talmor-etal-2019-commonsenseqa} training split and evaluate the resulting model on eight established commonsense reasoning benchmarks.
In the second protocol, we fine-tune on the MetaMath~\cite{yu2023metamath} training set and directly evaluate the model on GSM8K~\cite{cobbe2021gsm8k} and MATH~\cite{hendrycksmath2021} for mathematical reasoning.
The training sources and the test split sizes for all datasets are summarized in \cref{tab:dataset-reasoning}.

For ViT-Base/Large~\cite{dosovitskiy2021image} and Swin-Base/Large~\cite{liu2021Swin} backbones, we use dataset-specific hyperparameters, but for a given backbone–dataset pair we keep the global batch size and training schedule identical across all methods and compression rates. The detailed hyperparameters for ViT-Base/Large and Swin-Base/Large are listed in \cref{tab:vit_hparams,tab:swin_hparams}, respectively. In all ViT and Swin experiments, JACTUS is applied to all linear projections inside the attention and MLP blocks: the query, key, value, and output projections, as well as the MLP up- and down-projection matrices. All remaining modules, including embeddings, layer normalizations, and classification heads, remain dense.

\subsection*{Joint subspace estimation, statistics, and rank allocation}

For each dataset, we estimate the joint subspace and second-order statistics using a small calibration set. Concretely, we uniformly sample 1{,}024 examples from the training split of the corresponding dataset, shuffle them, and run the (frozen) model to collect the input and gradient features. The covariance matrices $\mathbf{C}_x$ and $\mathbf{C}_g$ are estimated over these calibration examples using a batch size of 8 and an exponential running average.

For ViT~\cite{dosovitskiy2021image} and Swin~\cite{liu2021Swin} backbones, the statistics are accumulated in \texttt{fp32} to avoid numerical issues. For Llama2-7B~\cite{touvron2023llama2}, we instead use \texttt{bf16} statistics due to memory constraints at scale. In all experiments, we use a fixed ridge regularization coefficient of $10^{-5}$ when inverting the covariance matrices.

The rank allocator uses a per-layer lower bound $k_{\min}$ on the allowed rank to avoid degenerate solutions. For all ViT and Swin experiments we set $k_{\min}=32$. Under the compression regimes considered in the main paper (compression rates between $40\%$ and $80\%$), no layer in the vision or LLM experiments actually hits this lower bound, so the ranks are effectively determined by the optimizer. The global compression rate $c$ is defined over all trainable parameters but excludes the embedding layers: we first convert $c$ into a total rank budget $B$, subtract the budget that would be assigned to the embeddings, and then allocate the remaining budget across the attention and MLP matrices only.

\subsection*{Language generation benchmarks}

For language generation, we evaluate on CommonsenseQA~\cite{talmor-etal-2019-commonsenseqa} and MetaMATH~\cite{yu2023metamath} using Llama2-7B~\cite{touvron2023llama2} as the base model. The train/validation/test splits of these benchmarks follow the standard dataset splits. On CommonsenseQA, the SVFT~\cite{lingam2024svft} baseline does not report fine-tuning results for Llama2-7B. We therefore reproduce SVFT in the banded setting recommended by the original paper, using a band width $d=8$. For SVD-LLM~\cite{wang2025svdllm}+LoRA~\cite{hu2022lora} and Dobi-SVD~\cite{qinsi2025dobisvd}+LoRA, we start from the official compressed checkpoints at compression rates $80\%$, $60\%$, and $40\%$, and attach a LoRA adapter to each low-rank matrix. All such models are then fine-tuned on the CommonsenseQA training set. For Dobi-SVD, we use the \emph{unremapping} checkpoints released by the authors to ensure a fair comparison.

All language experiments use a unified global batch size of 128 and the same optimization hyperparameters: weight decay is set to 0 and the warmup ratio to 0.03. We implement this global batch size using data-parallel training with appropriate choices of per-device batch size, gradient accumulation steps, and the number of GPUs. For both CommonsenseQA and MetaMATH, we train with a maximum sequence length of 1024 and use \texttt{bf16} precision. The learning rate is scheduled with a cosine decay scheduler, and we fix the random seed to 42 for all runs.

During fine-tuning, we adopt an instruction-following prompt format. Given a task-specific instruction string \texttt{\{instruction\}}, we construct the input as:
\begin{verbatim}
Below is an instruction that describes a 
task. Write a response that appropriately 
completes the request.

### Instruction:{instruction}

### Response:
\end{verbatim}

At inference time, we decode with top-p sampling using $p=1$ (\ie, effectively greedy decoding) and temperature 0, with a maximum length of 1024 tokens.

The method- and compression-specific learning rates and numbers of epochs on CommonsenseQA are summarized in \cref{tab:csqa_hparams}. For MetaMATH, we reuse the same global batch size, warmup ratio, weight decay, and optimization settings, and only vary the learning rate and the number of epochs as shown in \cref{tab:metamath_hparams}.
\section*{Appendix C: Robustness to Random Seeds}
\label{sec:multiseed}

To evaluate optimization stability, we repeat selected configurations with three different random seeds under the same training recipe (data split, batch size, optimizer, and schedule). We report mean $\pm$ standard deviation over three runs. Baselines are reproduced under the same setup for fairness. The last column reports the overall average across all evaluated tasks in the corresponding benchmark suite, while only representative tasks are shown here for brevity.

\vspace{0.1in}

\begin{table}[t]
\centering

\begin{minipage}[t]{0.49\linewidth}
\centering
\caption{Language tasks on Llama2-7B at 60\% retained parameters with 100\% PEFT baselines. (mean$\pm$std over 3 runs).}
\label{tab:multiseed-language}
\vspace{-0.08in}
\resizebox{\linewidth}{!}{
\begin{tabular}{l c c c}
\toprule
Method & BoolQ & PIQA & Overall Avg. \\
\midrule
LoRA (100\%) & 69.8 $\pm$ 0.3 & 79.9 $\pm$ 0.2 & 77.6 $\pm$ 0.3 \\
DoRA (100\%) & 71.8 $\pm$ 0.2 & 83.7 $\pm$ 0.2 & 79.7 $\pm$ 0.2 \\
\midrule
SVD-LLM+LoRA (60\%) & 65.2 $\pm$ 0.3 & 69.4 $\pm$ 0.2 &
68.0 $\pm$ 0.2 \\
JACTUS (60\%) & \textbf{66.8 $\pm$ 0.2} & \textbf{79.8 $\pm$ 0.3} & \textbf{75.5 $\pm$ 0.2} \\
\bottomrule
\end{tabular}}
\end{minipage}
\hfill
\begin{minipage}[t]{0.49\linewidth}
\centering
\caption{Vision tasks on ViT-Base at 60\% retained parameters with 100\% baselines. (mean$\pm$std over 3 runs).}
\label{tab:multiseed-vision}
\vspace{-0.08in}
\resizebox{\linewidth}{!}{
\begin{tabular}{l c c c}
\toprule
Method & CIFAR-100 & RESISC45 & Overall Avg. \\
\midrule
LoRA (100\%) & 92.0 $\pm$ 0.1 & 94.4 $\pm$ 0.3 & 86.2 $\pm$ 0.2 \\
DoRA (100\%) & 92.6 $\pm$ 0.2 & 94.2 $\pm$ 0.2 & 87.9 $\pm$ 0.2 \\
\midrule
SVD-LLM+LoRA (60\%) & 90.1 $\pm$ 0.2 & 92.5 $\pm$ 0.3 & 85.9 $\pm$ 0.2 \\
JACTUS (60\%) & \textbf{91.6 $\pm$ 0.2} & \textbf{95.0 $\pm$ 0.3} & \textbf{87.2 $\pm$ 0.3} \\
\bottomrule
\end{tabular}}
\end{minipage}

\vspace{-0.1in}
\end{table}

\vspace{-0.05in}

Across both language and vision tasks, JACTUS consistently exhibits small variance (typically $<0.3$) while maintaining higher mean accuracy. This indicates stable convergence behavior under strict retained-parameter budgets.
\section*{Appendix D: Training Efficiency and Memory Usage}
\label{sec:training-efficiency}

Beyond deployment efficiency, we further evaluate the training-time
efficiency of JACTUS. Since JACTUS introduces a one-time
\emph{calibration stage} before fine-tuning, a natural question is whether
compression-aware adaptation increases the overall training cost.

\para{Training protocol.}
The training procedure consists of two stages:

\begin{itemize}
\item \textbf{Calibration.}
We first run the frozen pretrained model on a small calibration set
(1,024 training samples) to collect activation and gradient statistics for
estimating the joint subspace. This stage involves only forward and backward
passes, without parameter updates, and is performed only once for each task.

\item \textbf{Fine-tuning.}
After compression, adaptation is performed entirely in the reduced
low-rank parameter space.
\end{itemize}

Importantly, calibration is a one-time overhead and does not scale with the
number of training epochs.

\para{Wall-clock training time.}
Table~\ref{tab:eff} summarizes the wall-clock training time.
Although JACTUS introduces a short calibration stage, the subsequent
fine-tuning phase becomes more efficient due to the reduced number of
trainable parameters and lower memory traffic. For ViT-Base on
\textsc{CIFAR-100}, calibration takes only 0.04h, while the
fine-tuning time is reduced from 1.18h with standard LoRA/DoRA to
1.05h with JACTUS, resulting in a lower total training time of
1.09h. A similar trend is observed for Llama2-7B on
CommonsenseQA: JACTUS requires only 0.25h for calibration,
reduces fine-tuning time from 9.20h to 8.81h, and yields a
slightly lower total training time of 9.06h. These results indicate
that the calibration overhead is small in practice and can be largely offset
by the improved optimization efficiency in the compressed space.

\para{Peak memory usage.}
Because JACTUS performs optimization directly in the compressed subspace, it
reduces both the number of trainable parameters and the associated optimizer
states, leading to lower peak GPU memory usage during training. As shown in
Table~\ref{tab:eff}, for ViT-Base, peak memory is reduced from
5.9G to 4.3G; for Llama2-7B, it decreases from 42.8G to 40.2G. Therefore, JACTUS not only maintains
comparable or slightly better end-to-end training time, but also consistently
improves memory efficiency across both vision and language backbones.

\begin{table}[!t]
\centering
\caption{Training efficiency and memory comparison between JACTUS (60\% parameter budget) and standard PEFT methods (100\% parameter budget). ViT-Base is trained on CIFAR-100, while Llama2-7B is trained on CommonsenseQA.}
\setlength{\tabcolsep}{3.4pt}
\renewcommand{\arraystretch}{.98}
\begin{tabular}{llcccc}
\toprule
\textbf{Backbone} & \textbf{Method} & \textbf{Calibration} & \textbf{Tune} & \textbf{Total} & \textbf{Memory} \\
\midrule
\multirow{2}{*}{ViT-Base}
& LoRA/DoRA  & - & 1.18h & 1.18h & 5.9G \\
& \textbf{JACTUS}  & 0.04h & 1.05h & 1.09h & \textbf{4.3G} \\
\midrule
\multirow{2}{*}{Llama2-7B}
& LoRA/DoRA  & - & 9.20h & 9.20h & 42.8G \\
& \textbf{JACTUS}  & 0.25h & 8.81h & 9.06h & \textbf{40.2G} \\
\bottomrule
\end{tabular}
\label{tab:eff}
\vspace{-10pt}
\end{table}
\section*{Appendix E: Trade-offs between Compression and Performance}

In this section, we provide additional quantitative evidence on the trade-off between performance and computational efficiency achieved by JACTUS under different compression rates. Specifically, we report accuracy--compression curves for Llama2-7B~\cite{touvron2023llama2} averaged over eight language generation benchmarks (as summarized in Tab.~3), together with throughput--compression curves measured for Llama2-7B in a deployment-oriented setting.

\subsection*{Accuracy vs. Compression}

\Cref{fig:acc_compression} shows the overall average end-task accuracy across eight language generation benchmarks as a function of the compression rate for Llama2-7B~\cite{touvron2023llama2}. For each parameter budget, we compare the dense baseline (100\% parameters) with JACTUS under multiple compression rates.

Overall, JACTUS preserves most of the dense-model accuracy at moderate compression rates (\eg, 80\% and 60\% of the original parameters), while degrading gracefully under more aggressive compression, such as 40\%. The resulting curves are generally smooth, suggesting that the proposed global rank allocator can distribute model capacity effectively across layers rather than causing abrupt performance collapse under tight parameter budgets. Across these language benchmarks, JACTUS consistently outperforms naive low-rank baselines at the same nominal compression rate, indicating that cross-layer allocation guided by empirical covariance statistics is important for maintaining downstream accuracy.

\begin{figure}[t]
    \centering
    \includegraphics[width=0.55\linewidth]{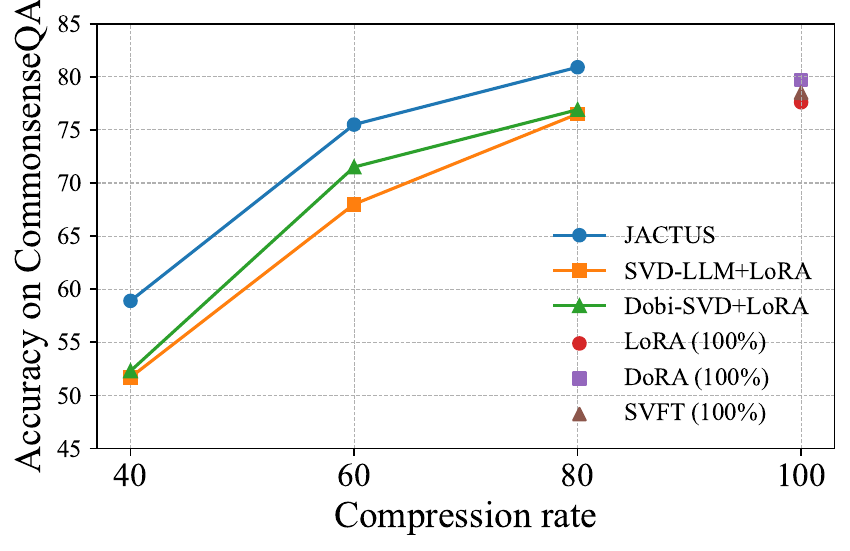}
    \caption{Overall average accuracy vs.\ compression rate on eight language generation benchmarks with Llama2-7B. JACTUS maintains competitive performance at moderate compression and degrades gracefully as compression becomes more aggressive.}
    \vspace{-0.1in}
    \label{fig:acc_compression}
\end{figure}

\subsection*{Throughput vs. Compression}
\begin{figure*}[!t]
    \centering
    \includegraphics[width=1\linewidth]{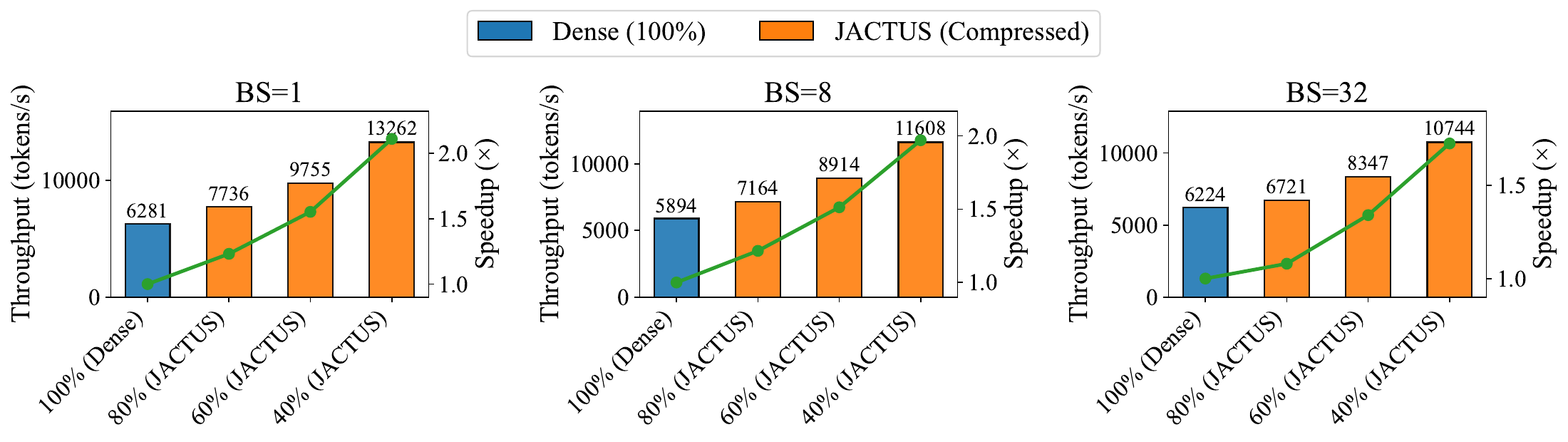}
    \caption{Throughput and speedup under different global batch sizes (BS) as a function of the compression rate on CommonsenseQA~\cite{talmor-etal-2019-commonsenseqa} with Llama2-7B~\cite{touvron2023llama2}. Blue bars show the throughput of the dense model and orange bars show the throughput of the model compressed by JACTUS (left axis, in tokens/s), while the green curve shows the speedup over the dense baseline (right axis).}
    \label{fig:throughput}
    \vspace{-0.1in}
\end{figure*}

To understand the impact of JACTUS on deployment-time efficiency, we also measure the decoding throughput of Llama2-7B~\cite{touvron2023llama2} on CommonsenseQA~\cite{talmor-etal-2019-commonsenseqa} under different compression rates. We first compress the model using a calibration set of 1{,}024 examples drawn from the CommonsenseQA training split, and then evaluate end-to-end generation throughput (in tokens per second) on the test set. \Cref{fig:throughput} reports the results for three global batch sizes, $\mathrm{BS}=1, 8, 32$.

Blue bars correspond to the dense (100\%) model, orange bars to JACTUS-compressed models at 80\%, 60\%, and 40\% parameter budgets, and the green curve shows the speedup relative to the dense baseline. Across all batch sizes, throughput increases monotonically as compression becomes more aggressive. At a moderate compression rate of 60\%, JACTUS already delivers approximately $1.2$--$1.5\times$ speedup over the dense model. When the budget is further reduced to 40\%, the speedup approaches $2\times$ for small and medium batch sizes and remains substantial (around $1.7\times$) even at $\mathrm{BS}=32$. Together with the accuracy--compression trends in \Cref{fig:acc_compression}, these results show that JACTUS can provide meaningful inference-time wall-clock gains while preserving competitive overall performance across language generation benchmarks.
\section*{Appendix F: Rank Allocation Visualizations}

In this section, we provide additional visualizations of the layer-wise ranks
selected by the JACTUS allocator. Since the main text already presents the rank allocation behavior on the vision backbone (ViT), here we provide the corresponding visualization for Llama2-7B. For this model, we run the greedy global rank allocation algorithm on a fixed calibration set and visualize the resulting ranks of
different submodules across layers under different compression rates. These
results complement the quantitative accuracy and throughput comparisons in
\cref{fig:acc_compression,fig:throughput} by illustrating how JACTUS
distributes its parameter budget within a large language model.

\subsection*{Llama2-7B on CommonsenseQA}

\begin{figure*}[!t]
    \centering
    \includegraphics[width=1\linewidth]{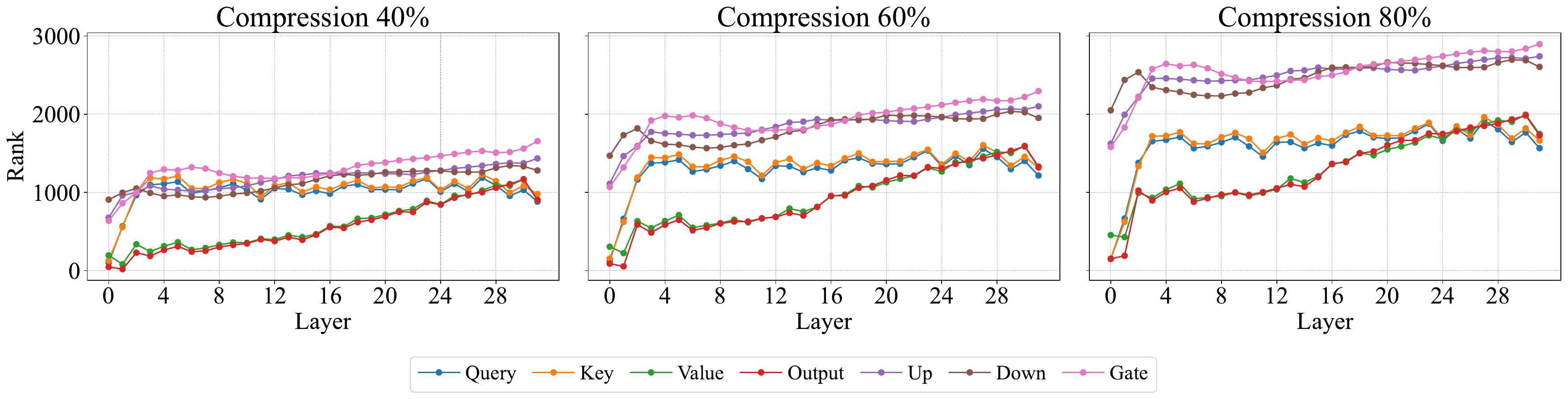}
    \caption{Layer-wise ranks selected by the greedy global allocator for Llama2-7B~\cite{touvron2023llama2} under 40\%, 60\%, and 80\% compression. We plot the ranks of attention (Query/Key/Value/Output) and MLP (Up/Down/Gate) projections as a function of layer index. The calibration set is built from the CommonsenseQA~\cite{talmor-etal-2019-commonsenseqa} training split.}
    \label{fig:ranks_llama}
    \vspace{-0.2in}
\end{figure*}

\cref{fig:ranks_llama} shows the layer-wise rank allocation for
Llama2-7B~\cite{touvron2023llama2} under 40\%, 60\%, and 80\% compression.
Specifically, we report the ranks assigned to the self-attention projections
(Query, Key, Value, and Output) and the MLP projections (Up, Down, and Gate)
at each layer. The calibration set is sampled from the CommonsenseQA
training split~\cite{talmor-etal-2019-commonsenseqa}.

Consistent with the trend observed for ViT in the main text, JACTUS tends to
assign larger ranks to deeper layers, while relatively suppressing the ranks
of the Key and Value projections. In addition, the MLP projections generally
receive higher ranks than several attention projections, suggesting that the
allocator identifies them as more parameter-sensitive components under the
same budget. As the compression rate becomes more aggressive, from 80\% to
40\%, the overall rank scale decreases accordingly, while the qualitative
allocation pattern remains largely stable. This indicates that JACTUS
consistently prioritizes later layers and capacity-intensive modules even
under tighter parameter budgets.

    %
    %

\end{document}